\documentclass[lettersize,journal]{IEEEtran}
\usepackage{amsmath,amsfonts}
\usepackage{algorithmic}
\usepackage{algorithm}
\usepackage{array}
\usepackage[caption=false,font=normalsize,labelfont=sf,textfont=sf]{subfig}
\usepackage{textcomp}
\usepackage{stfloats}
\usepackage{url}
\usepackage{verbatim}
\usepackage{graphicx}
\usepackage{cite}

\usepackage{times}
\usepackage{soul}
\usepackage{url}
\usepackage[hidelinks]{hyperref}
\usepackage[utf8]{inputenc}
\usepackage[small]{caption}
\usepackage{amsthm}
\usepackage{booktabs}
\usepackage[switch]{lineno}

\usepackage{float}
\usepackage{subfig}
\usepackage{amsmath,amssymb,amsfonts}
\usepackage{xcolor}
\usepackage{booktabs}
\usepackage{color}
\usepackage{makecell}
\usepackage{multirow}

\newtheorem{theorem}{Theorem}
\newtheorem{assumption}{Assumption}
\newtheorem{lemma}{Lemma}
\newtheorem{remark}{Remark}

\hyphenation{op-tical net-works semi-conduc-tor IEEE-Xplore}
% updated with editorial comments 8/9/2021

\begin{document}

\title{The Diversity Bonus: Learning from Dissimilar Distributed Clients in Personalized Federated Learning}

\author{Xinghao~Wu, Xuefeng~Liu, Jianwei~Niu,~\IEEEmembership{Senior~Member,~IEEE}, Guogang~Zhu, Shaojie~Tang, Xiaotian Li, and Jiannong Cao,~\IEEEmembership{Fellow, IEEE}% <-this % stops a space
		\IEEEcompsocitemizethanks{\IEEEcompsocthanksitem X. Wu, X. Liu, J. Niu and G. Zhu are with the School of Computer Science and Engineering, Beihang University, China.
			%\protect\\
			% note need leading \protect in front of \\ to get a newline within \thanks as
			% \\ is fragile and will error, could use \hfil\break instead.
			E-mail: wuxinghao@buaa.edu.cn, liu\_xuefeng@buaa.edu.cn, niujianwei@buaa.edu.cn, buaa\_zgg@buaa.edu.cn.
			\IEEEcompsocthanksitem S. Tang is with the Jindal School of Management, The University of Texas at Dallas. E-mail: tangshaojie@gmail.com. 
               \IEEEcompsocthanksitem X. Li is with the Department of Statistics, The University of Chicago. E-mail: xiaotian13@uchicago.edu. 
               \IEEEcompsocthanksitem J. Cao is with the Department of Computing, The Hong Kong Polytechnic University, Hong Kong. E-mail: jiannong.cao@polyu.edu.hk.
			\IEEEcompsocthanksitem Corresponding author: Jianwei Niu.
	}}

% The paper headers
\markboth{Journal of \LaTeX\ Class Files,~Vol.~14, No.~8, August~2021}%
{Shell \MakeLowercase{\textit{et al.}}: A Sample Article Using IEEEtran.cls for IEEE Journals}

% \IEEEpubid{0000--0000/00\$00.00~\copyright~2021 IEEE}
% Remember, if you use this you must call \IEEEpubidadjcol in the second
% column for its text to clear the IEEEpubid mark.

\maketitle

\begin{abstract}
Personalized Federated Learning (PFL) is a commonly used framework that allows distributed clients to collaboratively and in parallel train their personalized models. PFL is particularly useful for handling situations where data from different clients are not independent and identically distributed (non-IID). Previous research in PFL implicitly assumes that clients can gain more benefits from those with similar data distributions. Correspondingly, methods such as personalized weight aggregation are developed to assign higher weights to similar clients during training. We pose a question: can a client benefit from other clients with dissimilar data distributions and if so, how? This question is particularly relevant in scenarios with a high degree of non-IID, where clients have widely different data distributions, and learning from only similar clients will lose knowledge from many other clients. We note that when dealing with clients with similar data distributions, methods such as personalized weight aggregation tend to enforce their models to be close in the parameter space. It is reasonable to conjecture that a client can benefit from dissimilar clients if we allow their models to depart from each other. Based on this idea, we propose DiversiFed which allows each client to learn from clients with diversified data distribution in personalized federated learning. DiversiFed pushes personalized models of clients with dissimilar data distributions apart in the parameter space while pulling together those with similar distributions. In addition, to achieve the above effect without using prior knowledge of data distribution, we design a loss function that leverages the model similarity to determine the degree of attraction and repulsion between any two models. Experiments on three benchmark datasets and a public medical dataset show that DiversiFed can benefit from dissimilar clients and thus outperform the state-of-the-art methods, especially when the non-IID degree is high. 
\end{abstract}

\begin{IEEEkeywords}
Non-IID, Personalized Federated Learning, Dissimilar Clients
\end{IEEEkeywords}

\section{Introduction}\label{introduction}
\IEEEPARstart{I}{n} recent years, the growth of AI applications and the data utilized to train AI models has been rapid, but it faces challenges due to the distribution of this data across organizations, edge devices, IoT devices, and more. Privacy protection laws like the EU's General Data Protection Regulation (GDPR) \cite{regulation2016regulation} prevent this scattered data from being gathered in data centers for training large-scale AI models. This is a notable obstacle in areas like the medical field, where data from various institutions can't be centralized to train robust models, limiting the full potential of AI. A solution to this issue is Federated Learning (FL) \cite{mcmahan2017federated}, a privacy-preserving distributed machine learning framework. It allows distributed entities to collaboratively and in parallel train a global model that performs well on all participant data, without needing to share the raw data.

A great challenge faced by FL is the non-independent and identically distribution (non-IID) of data among clients. For instance, different user preferences result in varied distributions of photo categories stored on separate mobile devices. In the presence of non-IID data, the performance of a global model trained by traditional FL is greatly degraded \cite{li2020federated, karimireddy2020scaffold}. 
One approach to address this non-IID problem is personalized federated learning (PFL) \cite{tan2021towards}, in which each client trains a personalized model to better suit its local data distribution. A crucial aspect of PFL is determining how to obtain knowledge from other clients to enhance the generalization of personalized models.

A widely used strategy in PFL is that each client maintains a global model and a personalized model simultaneously. The global model is trained by all clients and the personalized model learns from the global model through fine-tuning \cite{yang2020fedsteg,fallah2020personalized,acar2021debiasing}, model regularization \cite{t2020personalized,li2021ditto}, etc. As the global model contains all clients' knowledge, these methods can allow each client to obtain other clients' help. In these approaches, the global model is obtained by aggregating all client models with the same weight. Consequently, a client receives the same level of assistance from all clients when learning from the global model. More recently, some studies, such as FedAMP \cite{huang2021personalized} and APPLE \cite{ijcai2022p301}, find that \textit{if a client can get more help from those clients with similar data distribution, its personalized model can achieve better performance}. Therefore, they design personalized weight aggregation methods in which a personalized model receives more contribution from clients with similar data distribution.

We observe that in the above approach, a client mainly gains help from clients with similar data distribution. This also implies that learning from clients with dissimilar distribution has a negative effect. Hence, we raise the following question: \textit{Is it possible a client can still benefit from those with dissimilar data distributions?} 

The answer to the question above can be gained by examining why a client can benefit from those with similar data distributions. In the personalized weight aggregation methods, the essence is to bring the personalized models of clients with similar data distribution closer in the parameter space. Since the optimal personalized models of clients with similar data distribution should be located close to each other in the parameter space, keeping these models close to each other during training can provide more benefits. Similarly, for clients with dissimilar data distributions, since their ultimate personalized models are expected to be different, a client should also benefit by \textit{keeping its personalized model away from those with dissimilar data distributions during the training process}. 
 
\begin{figure}[t]
% \setlength{\abovecaptionskip}{0.cm}
% \setlength{\belowcaptionskip}{-0.cm}
% \centering
	% \begin{center}
		\centerline{\includegraphics[width=\linewidth]{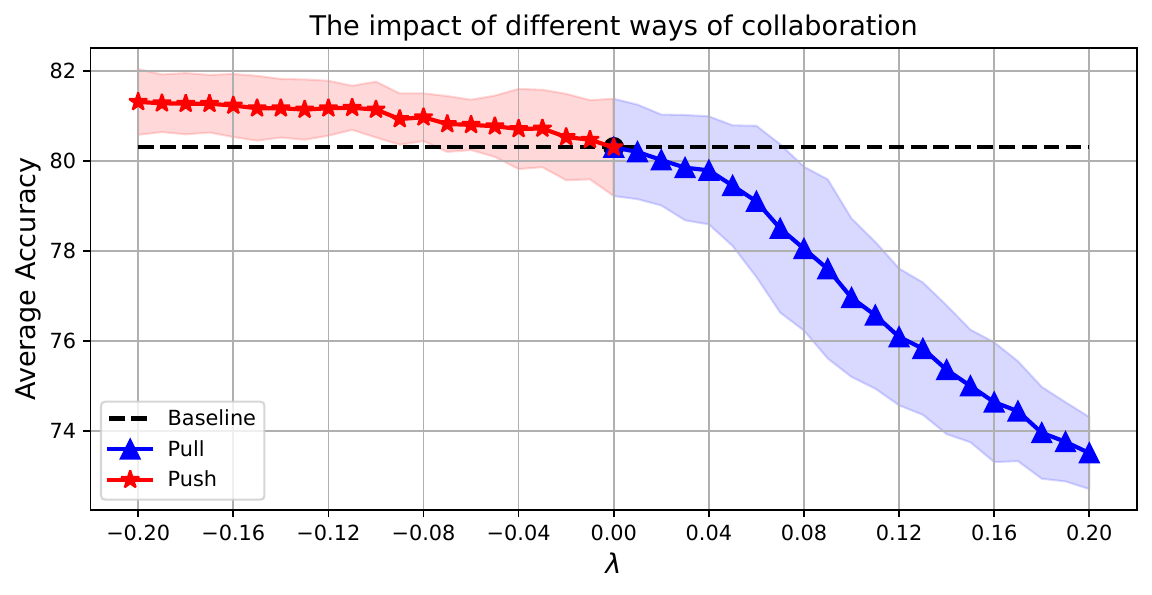}}
	% \end{center}
        \caption{An experiment to verify the effect of pulling personalized models together and pushing personalized models apart when the data distribution is very different.}
		\label{toy example result}
\end{figure}
To demonstrate the above conjecture, we carry out a preliminary experiment in a PFL scenario when the non-IID degree is high. Specifically, we set up five clients on the CIFAR-10 dataset, each with two classes of data without any class overlap. For a certain client $i$, the objective function of its personalized model $w_i$, here denoted as $L(w_i)$, is defined as
\begin{linenomath}
\begin{equation}\label{eq:pull and push}
	L(w_i) = L_{ce}(w_i) + \lambda\sum_{j \ne i}||w_i - w_j||_2,
\end{equation}
\end{linenomath}
where $L_{ce}$ represents the cross-entropy loss on its local data, and $\lambda$ is a hyperparameter used to control the distances among different clients. We can see when $\lambda > 0$, the personalized models of all five clients are ``pulled together" during the training process, and when $\lambda<0$,  they are ``pushed apart". We investigate the impact of various degrees of ``pulling together" and ``pushing apart" on the performance of PFL. For each $\lambda$ ranging from $-0.2\sim 0.2$, we repeat the experiments 10 times, and the average accuracy and \textcolor{black}{the standard deviation are} shown in Fig.~\ref{toy example result}. 

The black line in the graph represents the scenario of $\lambda=0$, in which none of the personalized models learn from each other, serving as a baseline. First, we can see that when $\lambda >0$, the average accuracy is lower than the baseline. This suggests that in the current scenario with a high degree of non-IID, even a small degree of ``pulling together" can have negative effects. This observation aligns well with the FedAMP and APPLE, where a client gets little help from those with dissimilar data distributions. More importantly, when $\lambda < 0$, the average accuracy is higher than the baseline. This confirms our conjecture above that personalized models can benefit from dissimilar clients by pushing them apart.

\begin{figure}[b]
% \setlength{\abovecaptionskip}{0.cm}
% \setlength{\belowcaptionskip}{-0.cm}
	% \begin{center}
 \centering
		\centerline{\includegraphics[width=\linewidth]{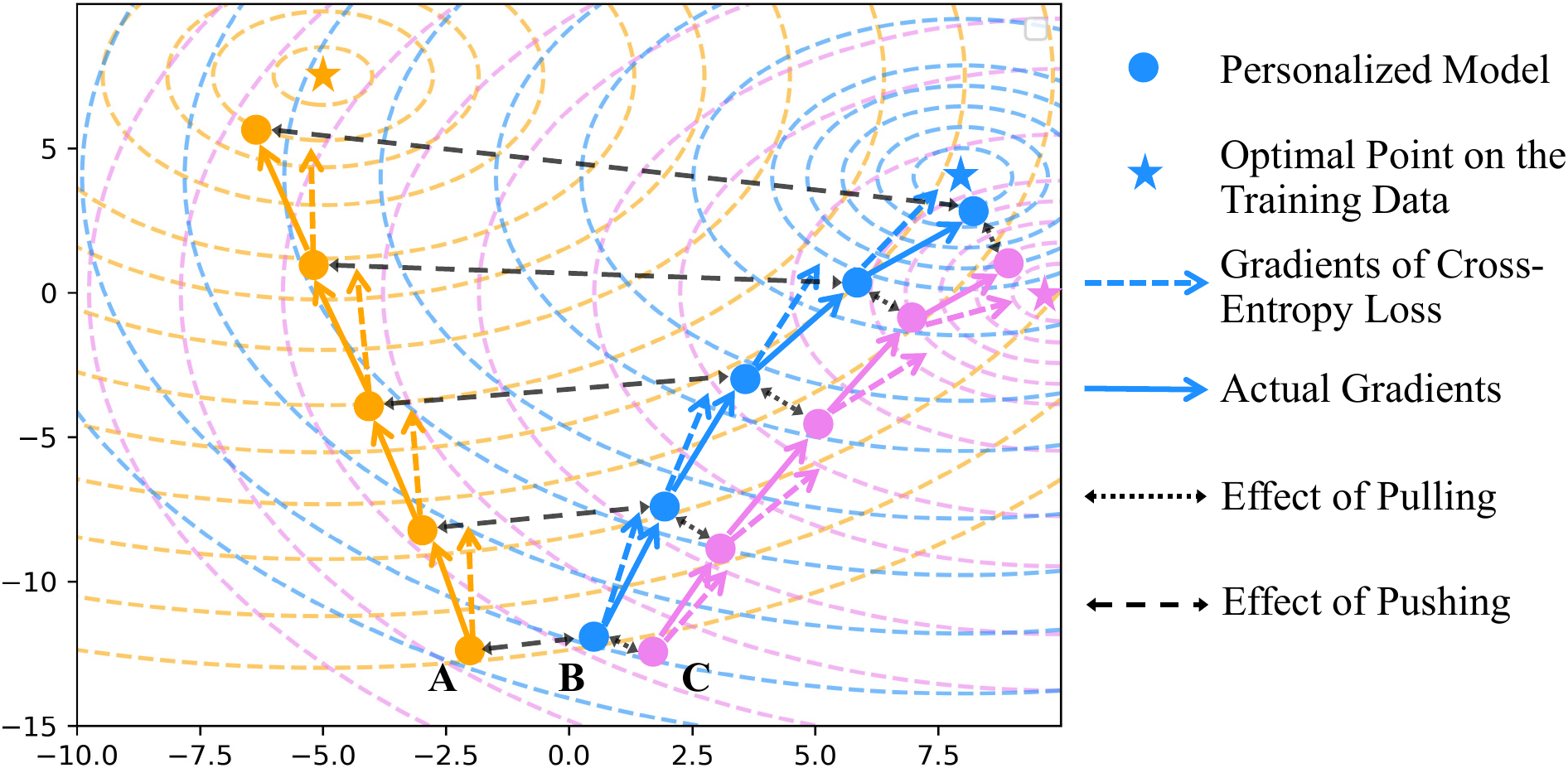}}
	% \end{center}
 \caption{A toy example to show the training process of DiversiFed. Similar clients (i.e., B and C) are pulled together and the dissimilar client (i.e., A) is pushed apart.}
		\label{toy example}
\end{figure}

Eq.~\eqref{eq:pull and push} provides a rudimentary strategy for regulating a client's assistance from others in PFL by modifying the $\lambda$. Nonetheless, this approach faces several obstacles. Firstly, privacy constraints in FL prevent advanced knowledge of client data distributions, making it challenging to determine which models should be attracted or repulsed. Secondly, the method becomes impractical when dealing with a large number of clients, as adjusting $\lambda$ individually is not feasible. Lastly, this technique necessitates transferring all clients' models to each client, leading to communication overhead and privacy concerns, rendering the approach impracticable.

In this paper, to overcome these challenges and implement our ideas, we introduce a novel method named DiversiFed that allows each client to learn from not only similar clients but also dissimilar clients. This method includes a new loss function based on model distance, which features both attractive and repulsive effects between any two models. Within the FL group, this function assesses the intensity of these effects based on model similarities. As a result, personalized models with similar data distributions are drawn closer, and those with dissimilar distributions are pushed apart without needing any prior knowledge of data distribution. When optimizing the loss function, we further propose to utilize an incremental proximal optimization framework to reduce communication costs and privacy issues.

Fig.~\ref{toy example} shows a toy example of the training process in DiversiFed of three clients, $A$, $B$, and $C$. The distribution of client $A$ is significantly different from the other two clients $B$ and $C$, as its optimal personalized model, denoted as yellow $\star$, is far from the other two. Clients $B$ and $C$ have similar optimal personalized models. Take the training process of client $B$ as an example. We can see that at each update, the actual gradients (shown as solid blue arrows) are affected not only by the cross-entropy loss (shown as dashed blue arrows) but also by the model distance loss. This causes its model to be closer to the model of client $C$ and farther away from client $A$.

Our main contributions can be summarized as follows:
\begin{itemize}
	\item We observe that the current PFL methods do not effectively leverage the knowledge of clients with dissimilar data distributions. To effectively utilize this knowledge, we demonstrate that it would be beneficial to push their models apart in the parameter space.
	\item We propose a new PFL algorithm named DiversiFed. By introducing the loss function based on the model distance in the parameter space, personalized models with similar data distributions are pulled together, while those with dissimilar data distributions are pushed apart. 
	\item We verify our method under multiple non-IID settings on three benchmark datasets and a public medical dataset. The experimental results demonstrate that our method outperforms state-of-the-art methods.
\end{itemize}

\section{Related Work}
\subsection{Traditional Federated Learning}
Traditional FL involves a collective effort from all distributed clients in parallel to train a global model under the server's coordination. However, in practical application scenarios, client data distributions within FL are often non-IID, leading to diverse local training objectives that can significantly hamper global model performance. Addressing the non-IID problem has become a paramount concern, posing substantial challenges for FL applications.

Predominant research has primarily focused on diminishing the impact of non-IID data on the global model by addressing client drift. One method, FedProx \cite{li2020federated}, introduces constraints on local model updates through an additional $L_2$ regularization term, ensuring the local models remain proximal to the global model throughout the update process. In contrast, SCAFFOLD \cite{karimireddy2020scaffold} implements a gradient correction term that accumulates the gradient deviations from the global model during training. This correction is subsequently used to adjust local updates, thereby aligning them closer to the global model. FedNTD \cite{lee2022preservation} employs knowledge distillation, enabling clients to absorb knowledge from the global model during local updates, which can be seen as a softer version of FedProx.

These methods have proven effective in minimizing the impact of non-IID data and enhancing the global model's accuracy. However, a limitation of traditional FL lies in its reliance on a single global model that has to cater to the data distributions of all participating clients. This approach may falter when trying to optimize performance on specific clients' data, given the global model's intent to accommodate the diverse data distributions of all clients.

\subsection{Personalized Federated Learning}
The limitation of the traditional FL framework leads to the exploration of personalized federated learning (PFL) approaches, which aim to address the challenge of client-specific performance within the FL framework. In PFL, instead of training a single global model, personalized models are created for individual clients, tailoring the learning process to their specific data characteristics. By adopting personalized models, PFL approaches can provide improved performance on each client's data while still benefiting from the collaborative nature of FL. This allows for a more fine-grained adaptation to the local data distributions, ultimately leading to enhanced accuracy and efficiency in FL scenarios. PFL has emerged as a prominent research direction within the FL field, with several categories of approaches being explored. The current mainstream PFL work can be broadly categorized into the following:

\textbf{Meta-learning-based methods.} These methods adopt the meta-learning concept where all clients cooperatively train a meta-model, which is subsequently fine-tuned by each client to yield personalized models. Notable PFL techniques such as Per-FedAvg \cite{fallah2020personalized}, FedMeta \cite{chen2018federated}, and others \cite{jiang2019improving} incorporate Model-Agnostic Meta-Learning (MAML) to attain this. However, it has been argued in \cite{acar2021debiasing} that prior meta-learning-based methods suffer from discrepancies between the local optimization objectives of clients and the global optimization goal. In response, they proposed a debiased technique that modifies the client's loss function, aiming to eradicate biases for more accurate training.

\textbf{Model-regularization-based methods.} These PFL strategies involve clients collectively creating a global model initially. During the personalized model training on the client side, a regularization term is introduced to deter the personalized models from straying too far from the global model. L2SGD \cite{hanzely2020federated}, pFedMe \cite{t2020personalized}, and Ditto \cite{li2021ditto} are notable methods employing this approach. They share a similar concept in training personalized models but exhibit differences in terms of their proposed global model construction and optimization strategies.

\textbf{Parameter-decoupling-based method.} This approach divides the neural network into two segments at the layer level: one shared amongst all clients, and the other personalized. Proposals like FedPer \cite{arivazhagan2019federated} and FedRep \cite{collins2021exploiting} aim to personalize the classifier while sharing the feature extractor. FedBN \cite{li2021fedbn} and MTFL \cite{mills2021multi} focus on personalizing the BN layers of the model. These methods manually and empirically select personalized layers. Some works try to pick layers automatically, such as through reinforcement learning \cite{sun2021partialfed} and hypernetwork \cite{ma2022layer}.

\textbf{Knowledge-distillation-based methods.} Knowledge distillation (KD) offers a novel approach to client collaboration. Unlike traditional methods that share knowledge by aggregating model parameters, KD-based methods allow clients to learn from the soft labels generated by other models. In FML \cite{shen2020federated}, clients use the global model as a teaching model and learn from it through KD during local updates. pFedSD \cite{jin2022personalized} uses KD to distill knowledge from the personalized models in the previous round during the client's local update to mitigate the catastrophic forgetting caused by model aggregation. FedProto \cite{tan2022fedproto}, grounded in prototype learning, calculates the prototype of the logit for each class of all clients and distributes these prototypes. Clients learn from these prototypes through KD.

Viewed from a knowledge transfer perspective, the aforementioned methods involve clients receiving assistance from other clients through collaboratively trained global models. All clients contribute equally to the global model as they hold the same aggregate weight, meaning each client receives an equal level of help from others. However, recent research posits that a client can garner more benefits from getting more assistance from clients with similar data distributions. This idea has led to the emergence of personalized-weight-aggregation-based methods as a popular research topic.

\textbf{Personalized-weight-aggregation-based methods.} Under this strategy, FedAMP \cite{huang2021personalized} designs an attention-inducing function to draw more similar personalized models closer in the parameter space. FedFomo \cite{zhang2020personalized} measures the model performance of different clients using local validation sets and steers the personalized model closer to those performing better. APPLE \cite{ijcai2022p301} guides clients to learn from those with similar data distributions by learning a directed relationship vector. CGPFL \cite{ijcai2022p311} builds multiple generalized models at the context level, pushing the personalized model towards the most relevant generalized model to learn from clients with similar data distributions. SFL \cite{ijcai2022p357} leverages graph-based structure information to build relationships between clients, fostering more collaboration between neighboring clients. However, these methods primarily focus on clients with similar data distributions. For those with dissimilar data distributions, these methods either inhibit their personalized models from collaborating or make minimal contributions to each other. As a result, personalized models of clients struggle to obtain help from those with dissimilar data distributions.

\section{Learning from Both Similar and Dissimilar Clients in Personalized Federated Learning}
In this section, we first define the problem of PFL. Then we demonstrate how to design model distance loss on the parameter space to pull similar models together while pushing dissimilar models apart. Lastly, we use an incremental proximal optimization framework to address the optimization problem that we have outlined.

\subsection{Overview of DiversiFed}\label{overview}

\begin{figure}[tb]
% \setlength{\abovecaptionskip}{0.cm}
% \setlength{\belowcaptionskip}{-0.cm}
	% \begin{center}
 % \centering
		\centerline{\includegraphics[width=\linewidth]{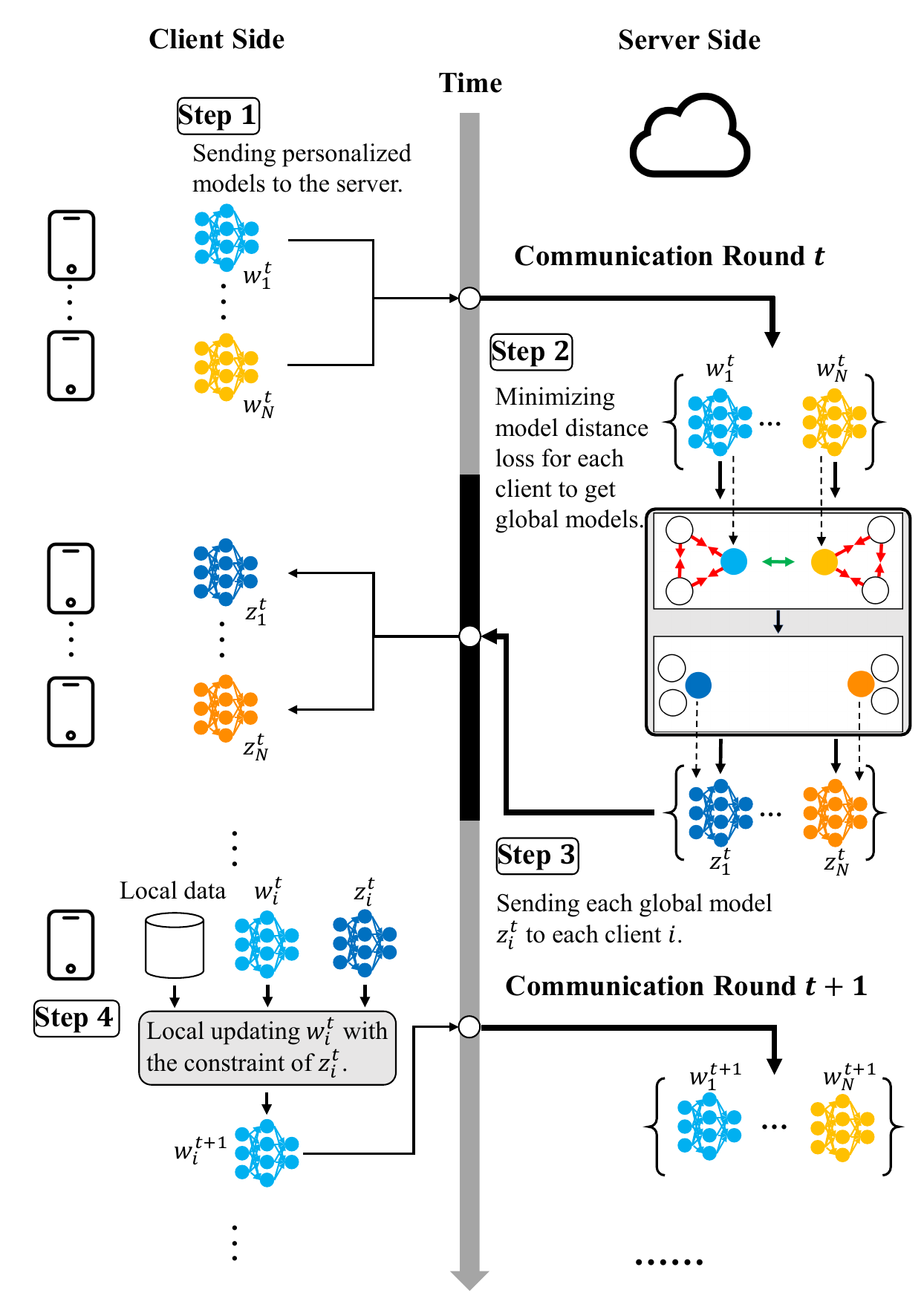}}
	% \end{center}
 \caption{The system workflow of DiversiFed.}
		\label{overview of DiversiFed}
\end{figure}

As shown in Fig.~\ref{overview of DiversiFed}, the training process of DiversiFed in each communication round can be summarized as follows:

Initially, each client uploads their personalized model $w_i$ in parallel to the server. Subsequently, the server minimizes the model distance loss, aiming to pull similar models together while pushing dissimilar models apart within the parameter space, thereby generating a global model $z_i$ for each client. Following this, the server dispatches each global model to the corresponding client. Ultimately, during local updates, each client maintains their personalized model in approximation to the global model to get help from other clients.

In the below sections, we will illustrate the details of DiversiFed step by step.

\subsection{PFL Problem Definition}\label{Problem Definition}
In PFL, the objective differs from traditional FL as it aims to ensure that each client's personalized model performs well on its respective local data distribution. We can represent the loss function of client $i$ as $L_i(w_i; D_i): \mathcal{W} \mapsto \mathbb{R}$, where $w_i$ denotes the personalized model of client $i$ and $D_i$ represents the data distribution of client $i$. The optimization objective of PFL can then be formulated as:
\begin{equation}
\min_{w_1,w_2,...,w_N} \sum_{i=1}^{N} L_i(w_i; D_i),
\end{equation}
where $N$ is the total number of distributed clients in the FL system. A straightforward approach to optimize this objective is for each client to train its personalized model $w_i$ using its local data, minimizing the loss function (e.g., cross-entropy loss). However, due to the limited amount of local data available in FL, the personalized model $w_i$ is easy to overfit the training data distribution $D_i^{train}$, resulting in poor generalization on the real data distribution $D_i$. Additionally, in non-IID scenarios, the data distributions among clients can be dissimilar (i.e., $D_i \neq D_j$ for $i \neq j$). Consequently, designing an appropriate loss function that allows a client's personalized model to learn effectively from clients with diverse data distributions in PFL poses a significant challenge.

\subsection{Designing the Loss Function in DiversiFed}\label{Introduce Contrastive Loss to PFL}
As we discussed in the section \ref{introduction}, during the training process, each client's model needs to learn the local data distribution while approaching or departing from other clients' models to obtain help. Therefore, we formulate the training objective for each client as
\begin{linenomath}
\begin{equation}\label{loss function}
	L_i(w_i; D_i) = L_{e}(w_i; D_i^{train}) + \lambda  L_{d}(w_i),
\end{equation}
\end{linenomath}
where $L_{e}$ denotes the empirical loss (e.g., cross-entropy loss in the classification task) on the local training data $D_i^{train}$, while $L_{d}$ is the model distance loss used to pull or push models in the parameter space. $\lambda$ is a hyperparameter to control the effect of $L_{d}$ in the overall training objective.

Designing $L_{d}$ faces two challenges. One is how to know the data distribution similarity among clients in the training process. The second is how to decide which clients should approach and which clients should depart, and to what extent. We tackle the first challenge by following standard practice in FL research, using model similarity to indirectly signify data distribution similarity. For the second challenge, we propose a method that embodies both attraction and repulsion effects between any two models. Within the FL group, more similar models experience a greater attraction force than the repulsion force. Conversely, if the models are more dissimilar, the repulsion force is dominant. This mechanism ensures similar models come closer and dissimilar ones distance themselves.

To implement this concept, we specify the $L_{d}$ in DiversiFed as
\begin{linenomath}
\begin{equation}\label{contrastive loss}
	L_{d}(w_i) = \frac{1}{|a(i)|} \cdot \sum_{j \in a(i)} \log \left(\frac{\exp(\frac{||w_i - w_j||}{\tau})}{\sum_{j \in a(i)} \exp(\frac{||w_i - w_j||}{\tau})} \right).
\end{equation}
\end{linenomath}
Here, $a(i)$ signifies all clients excluding client $i$ with $|a(i)|=N-1$, and $||w_i - w_j||$ denotes the Euclidean distance within the parameter space between client $i$'s and $j$'s personalized models. $\tau$ is a temperature coefficient used to scale the model distance.

When minimizing the $L_{d}$, for each $w_j$, $w_i$ is attracted to it (i.e., minimizing the numerator in the $\log$ function) and pushed away by the other models (i.e., maximizing the denominator in the $\log$ function). Ultimately, the composite of all attraction and repulsion effects dictates which personalized models $w_i$ should approach or deviate from. To simplify comprehension, we present a toy example with three personalized models.

\begin{figure}[b]
	\centering
	\subfloat[]{
		\label{effect of item 1}
		\includegraphics[width=0.31\linewidth]{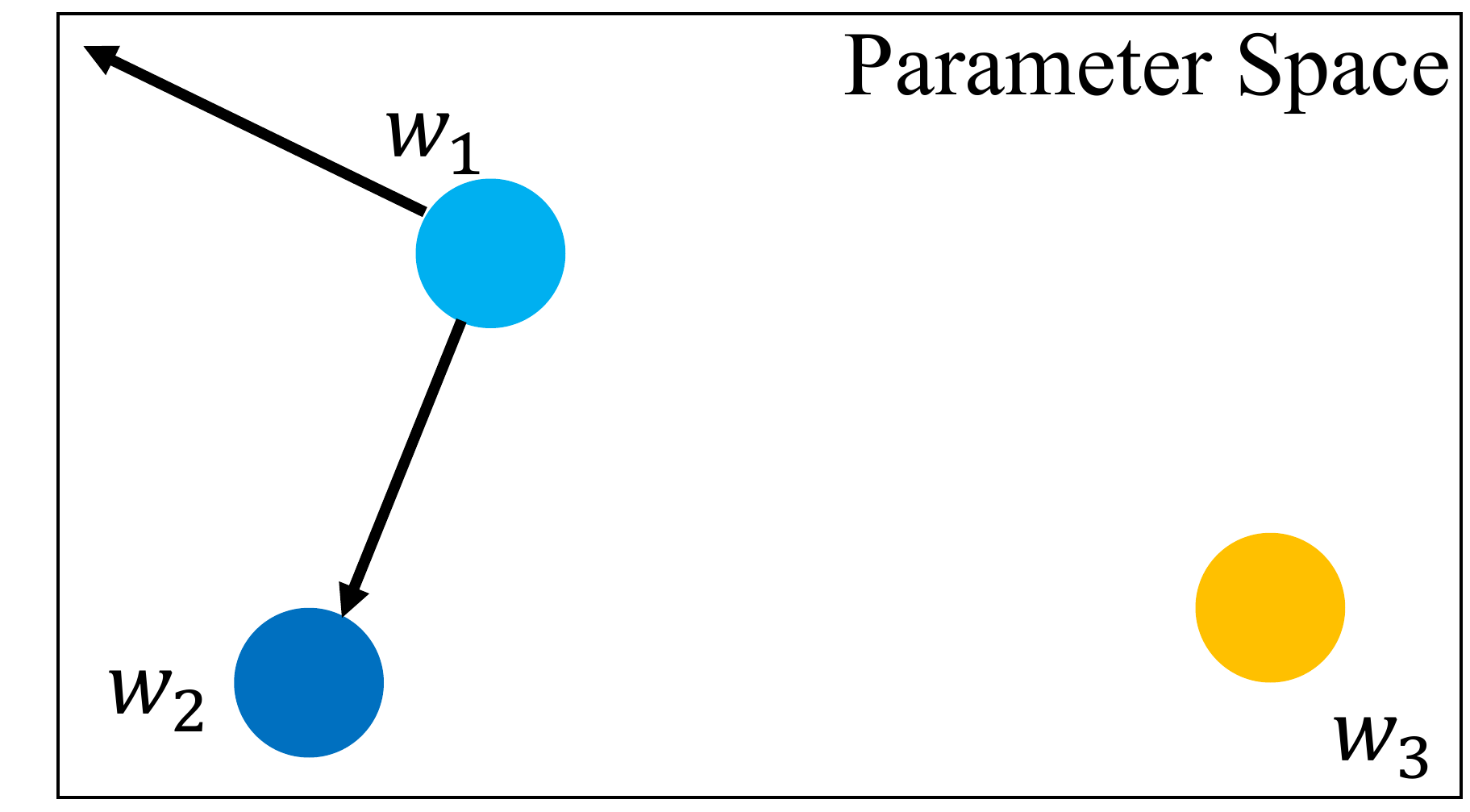}}
% 		\hspace{-0.2in} 
	\subfloat[]{
		\label{effect of item 2}
		\includegraphics[width=0.31\linewidth]{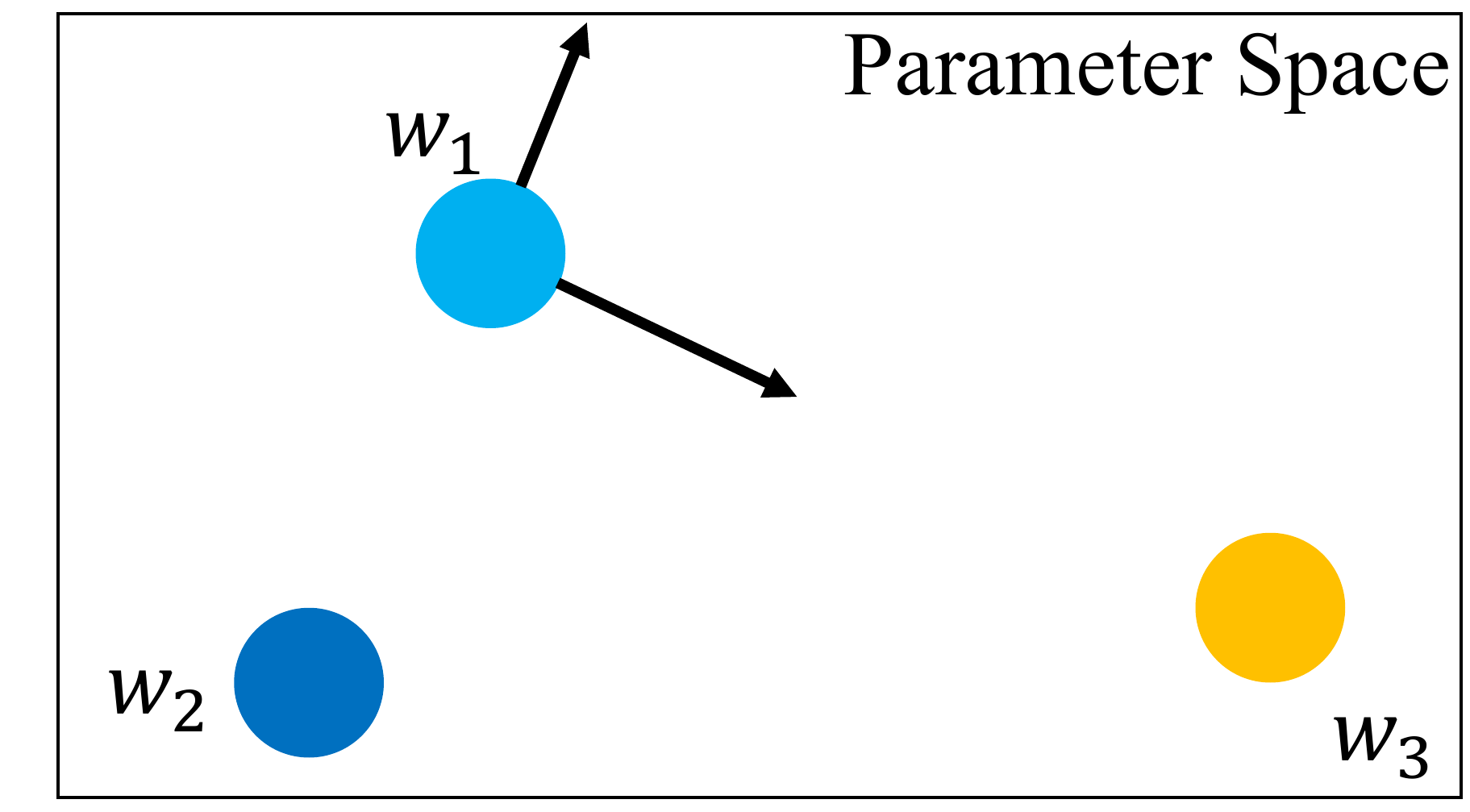}}
	   % \vspace{-0.3in}
	\subfloat[]{
		\label{effect of item 1 + item 2}
		\includegraphics[width=0.31\linewidth]{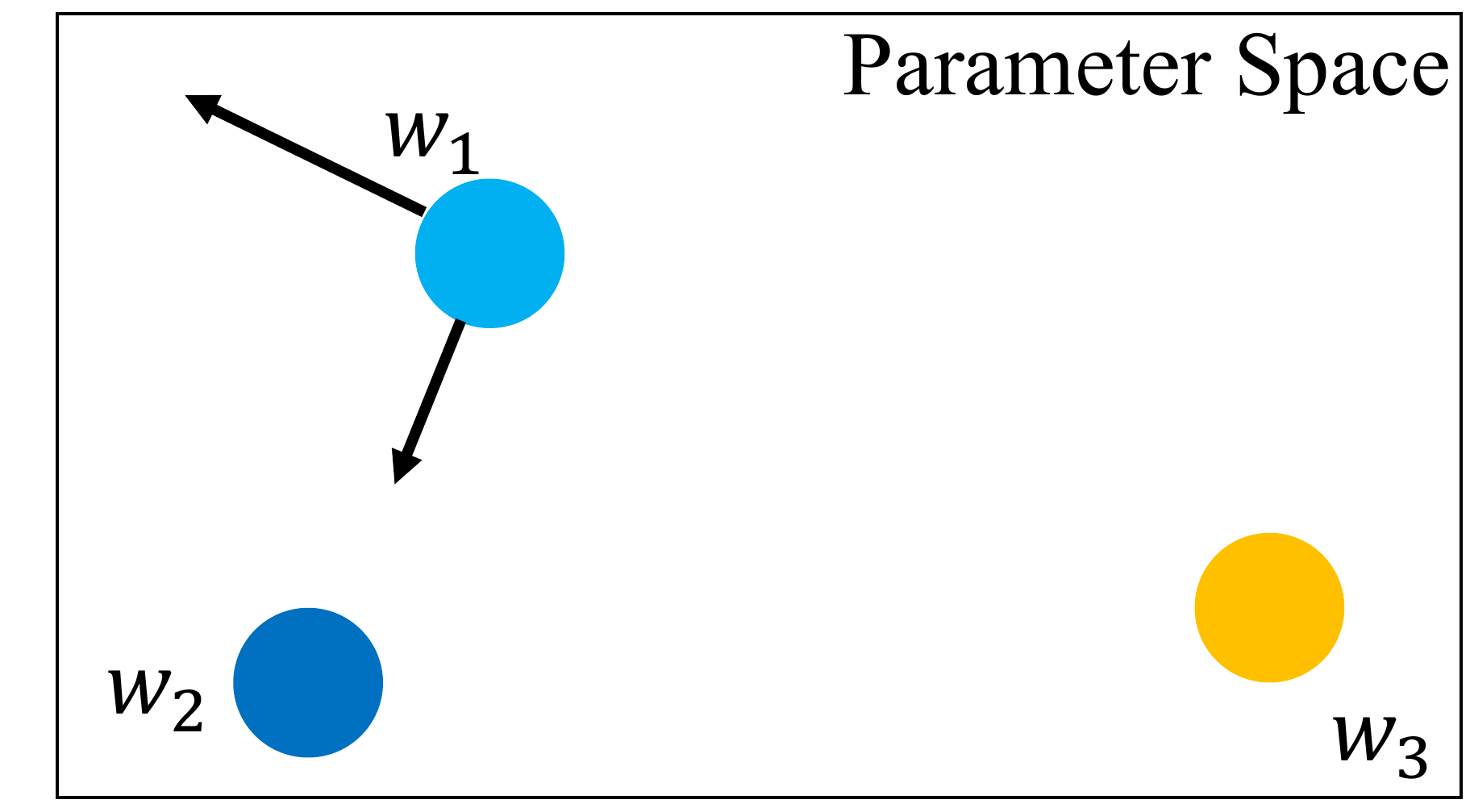}}
	   % \hspace{-0.2in} 
	
	\caption{A toy example to show the effect of $L_{d}$ in Eq.~\eqref{example of cl loss}. (a) shows the effect of item 1. (b) shows the effect of item 2. (c) shows the combined effect of item 1 and item 2.}
	\label{fig:example of cl loss}
\end{figure}

As shown in Fig.~\ref{fig:example of cl loss}, there are three personalized models $w_1$, $w_2$, and $w_3$ in the parameter space. $w_1$ and $w_2 $ are closer than $w_1 $ and $w_3$. By Eq~\eqref{contrastive loss}, the model distance loss of $w_1$ is
\begin{linenomath}
\begin{equation}\label{example of cl loss}
L_{d}(w_1) = \underbrace{ \frac{1}{2} \log \left(\text{softmax}(d_2) \right) } _\text{Item 1} + \underbrace{ \frac{1}{2} \log \left(\text{softmax}(d_3) \right) }_\text{Item 2},
\end{equation}
\end{linenomath}
where $d_k = \frac{||w_1-w_k||}{\tau}$ and $\text{softmax}(d_k) = \frac{\exp(d_k)}{\sum_{j \in [2,3]}\exp(d_j)}$, $k=2, 3$.
The effect of item 1 in $L_{d}(w_1)$ is shown in Fig.~\ref{effect of item 1}, where $w_1$ is pulled closer by $w_2$ and pushed away by $w_3$. Similarly, the effect of item 2 in $L_{d}(w_1)$ is shown in Fig.~\ref{effect of item 2}, where $w_1$ is pushed away by $w_2$ and pulled closer by $w_3$. As we will analyze theoretically in Lemma \ref{lemma1}, since $w_1$ and $w_2$ are more similar than $w_1$ and $w_3$ in parameter space, the pulling together effect between $w_1$ and $w_2$ in Fig.~\ref{effect of item 1} is higher than the pushing apart effect in Fig.~\ref{effect of item 2}. The pulling together effect between $w_1$ and $w_3$ in Fig.~\ref{effect of item 1} is lower than the pushing apart effect in Fig.~\ref{effect of item 2}. The combined effect of item 1 and item 2 is shown in Fig.~\ref{effect of item 1 + item 2}, that is, $w_1$ is approaching $w_2$ and departing from $w_3$.

\subsection{An Incremental Proximal Method to Minimize the Loss Function}\label{Incremental Optimization Method}
A straightforward way to optimize Eq.~\eqref{contrastive loss} is that each client optimizes locally using gradient descent. However, this approach requires the server to transmit the personalized models $w_j$ of all distributed clients to each individual client $i$. This leads to two issues. Firstly, in scenarios with a large number of clients, the communication overhead can become significant. Secondly, sharing other clients' models raises privacy concerns and the risk of information leakage. To address these challenges, we propose adopting a commonly used incremental proximal optimization framework based on the approach described in \cite{bertsekas2011incremental}. This framework allows for the step-by-step optimization of the objective function on both the server-side and the client-side.

At each communication round $t$, each client uploads its personalized model $w_i^t$ to the server. The server executes one step gradient descent 
\begin{linenomath}
\begin{equation}\label{server optimize}
	z_i^{t} = w_i^{t} - \alpha_t \nabla_{w_i^{t}} L_{d}(w_i^{t})
\end{equation}
\end{linenomath}
to minimize $L_{d}$, where $\alpha_t$ is the learning rate. On the client-side, after receiving $z_i^{t}$ from the server, each client optimizes 
\begin{linenomath}
\begin{equation}\label{client optimize}
	w_i^{t+1} = \arg \min_{w_i} \{L_{e}(w_i; D_i^{train}) + \frac{\lambda}{2\alpha_t}||w_i - z_i^{t}||^2\}
\end{equation}
\end{linenomath}
to get the new personalized model $w_i^{t+1}$. The above process is iterated until convergence or a preset maximum communication round $T$ is reached. The details of the training process are summarized as pseudocode in Algorithm \ref{alg:DiversiFed}.

\begin{algorithm}[tb]
	\caption{DiversiFed}
	\label{alg:DiversiFed}
	{\small
		\begin{algorithmic}
			\STATE {\bfseries Input:} Each client's initial personalized model $w_i^0$;
			Number of distributed clients $N$;
			Total communication rounds $T$;
			Hyperparameters $\lambda$ and $\tau$;
			Learning rate $\alpha_t$.
			\STATE {\bfseries Output:} Personalized model $w_i^T$ for each client. \\
			\FOR {$t = 0$ to $T-1$}
			\STATE Server sends $z_i^t$ to each client.
			\FOR{$i=1$ to $N$ \textbf{in parallel}}
			\IF{$t==0$}
			    \STATE Client $i$ optimizes $\arg \min_{w_i} \{L_{e}(w_i; D_i^{train}) \}$ for several local epochs to obtain $w_i^{t+1}$.
			\ELSE
			    \STATE Client $i$ optimizes $\arg \min_{w_i} \{L_{e}(w_i; D_i^{train})+ \frac{\lambda}{2\alpha_t}||w_i - z_i^{t}||^2 \}$ for several local epochs to obtain $w_i^{t+1}$.
			\ENDIF
			\STATE Client $i$ sends $w_i^{t+1}$ to the server.
			\ENDFOR
			\STATE Server executes $z_i^{t+1} = w_i^{t+1} - \alpha^{t+1} \nabla_{w_i^{t+1}} L_{d}(w_i^{t+1})$ to get a $z_i^{t+1}$ for each client.
			\ENDFOR
	\end{algorithmic}}
\end{algorithm}

Based on this framework, in each communication round, each client $i$ only uploads the personalized model $w_i^t$ to the server and the server only needs to deliver a global $z_i^t$ to client $i$. This approach ensures that the communication overhead of DiversiFed is comparable to traditional FL methods such as FedAvg in each round. \textcolor{black}{Also, as we prove in Lemma \ref{lemma1}, the $z_i^t$ is a linear combination of the personalized models $w_j^t, j \in [1, N]$. Client $i$ can not infer the specific personalized models of other clients. As a result, privacy is protected.}

\begin{lemma}\label{lemma1} \label{lemma1}
    If we optimize objective \eqref{loss function} by Eq.~\eqref{server optimize} and Eq.~\eqref{client optimize}, let $d_j=\frac{||w_i - w_j||}{\tau}$, we have
    \begin{enumerate}
    \item when $\frac{1}{|a(i)|} > \frac{\exp(d_j)}{\sum_{j \in a(i)}\exp(d_j)} $, $w_i$ approaches $w_j$;
    \item when $\frac{1}{|a(i)|} < \frac{\exp(d_j)}{\sum_{j \in a(i)}\exp(d_j)}$, $w_i$ departs from $w_j$. 
\end{enumerate}
\end{lemma}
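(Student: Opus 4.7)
The plan is to compute $\nabla_{w_i} L_d(w_i)$ explicitly from Eq.~\eqref{contrastive loss}, show that it decomposes into a sum of contributions each pointing along the line connecting $w_i$ to a single $w_j$, and then read off the sign of each contribution to determine whether the server step in Eq.~\eqref{server optimize} moves $w_i$ toward or away from $w_j$. Because Eq.~\eqref{client optimize} is a proximal update that anchors $w_i^{t+1}$ near $z_i^t$, the directional behavior established at the server transfers to the overall iterate for small enough $\alpha_t$.

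First I would introduce $p_j := \exp(d_j)/\sum_{k\in a(i)}\exp(d_k)$ so that the summand of $L_d$ becomes $\log p_j = d_j - \log\sum_k\exp(d_k)$, whose $w_i$-gradient is $\nabla d_j - \sum_k p_k\nabla d_k$. Summing over $j$ with the prefactor $1/|a(i)|$ collapses the second term to $\sum_k p_k\nabla d_k$, so
\begin{equation}
\nabla_{w_i} L_d(w_i) = \sum_{j\in a(i)}\left(\frac{1}{|a(i)|} - \frac{\exp(d_j)}{\sum_{k\in a(i)}\exp(d_k)}\right)\nabla_{w_i} d_j.
\end{equation}
Using $\nabla_{w_i} d_j = \tfrac{1}{\tau}\cdot\frac{w_i-w_j}{||w_i-w_j||}$, each summand is a scalar multiple of the unit vector pointing from $w_j$ to $w_i$.

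Next I would substitute into Eq.~\eqref{server optimize} to obtain $z_i^t - w_i^t = -\alpha_t\sum_j c_j\,\hat{u}_j$, where $c_j = \tfrac{1}{\tau}\bigl(\tfrac{1}{|a(i)|}-p_j\bigr)$ and $\hat{u}_j := (w_i^t-w_j^t)/||w_i^t-w_j^t||$. The $j$-th component of the step is $-\alpha_t c_j\hat{u}_j$: when $\tfrac{1}{|a(i)|}>\tfrac{\exp(d_j)}{\sum_k\exp(d_k)}$, the coefficient $c_j$ is positive, so the step contribution points along $-\hat{u}_j$, i.e., from $w_i$ toward $w_j$, which gives case~(1); when the inequality is reversed, $c_j<0$ and the contribution points along $+\hat{u}_j$, away from $w_j$, giving case~(2). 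The client proximal step in Eq.~\eqref{client optimize} pulls $w_i^{t+1}$ toward $z_i^t$ with strength $\lambda/\alpha_t$, so the same directional effect is inherited by the next iterate once $\alpha_t$ is small relative to the empirical-loss scale.

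The main subtlety I expect to address is interpretational: the total server displacement is a superposition of contributions from \emph{all} $k\in a(i)$, so the Euclidean distance $||z_i^t - w_j^t||$ need not monotonically decrease or increase overall. The lemma is therefore best read as a statement about the $j$-th component of the gradient step, and I would make this explicit in the write-up by tracking $\langle z_i^t - w_i^t,\, w_j^t - w_i^t\rangle$ restricted to the $j$-th summand. Once that reading is fixed, the argument reduces to the sign test on $c_j$ already carried out above.
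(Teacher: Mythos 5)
Your proposal is correct and follows essentially the same route as the paper's proof: both compute $\nabla_{w_i}L_d$ explicitly, isolate the coefficient $\frac{1}{|a(i)|}-\frac{\exp(d_j)}{\sum_k \exp(d_k)}$ multiplying the $(w_i-w_j)$ direction (your $c_j\hat{u}_j$ is exactly the paper's $\beta_j^t(w_j^t-w_i^t)$ term, since $\frac{1}{\tau}\hat{u}_j=\frac{w_i-w_j}{\tau^2 d_j}$), and conclude by a sign test, with the client proximal step transferring the effect to $w_i^{t+1}$. Your closing remark that the statement should be read component-wise, since the full displacement is a superposition over all $k\in a(i)$, is a fair clarification that the paper leaves implicit.
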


\begin{proof}
According to Eq.~\eqref{server optimize}, we can obtain
\begin{equation}
\begin{aligned}
z_i^{t} &= w_i^t - \alpha_t \nabla_{w_i^t} L_{d}(w_i^t) \\
&=w_i^t - \alpha_t \sum_{j \in a(i)} \left( \frac{1}{|a(i)|} - \frac{\exp(d_j^t)}{\sum_{j \in a(i)}\exp(d_j^t)} \right)  \frac{w_i^t}{\tau^2 d_j^t} \\
&+ \alpha_t \sum_{j \in a(i)} \left(\frac{1}{|a(i)|} - \frac{\exp(d_j^t)}{\sum_{j \in a(i)} \exp(d_j^t)} \right) \frac{w_j^t}{\tau^2 d_j^t}.
\end{aligned}
\end{equation}
Let $\xi_j^t = \frac{1}{|a(i)|} - \frac{\exp(d_j^t)}{\sum_{j \in a(i)}\exp(d_j^t)}$ , we have
\begin{equation}
\begin{aligned}
z_i^t &= \left(1 - \alpha_t \sum_{j \in a(i)} \xi_j^t \cdot  \frac{1}{\tau^2 d_j^t} \right)  w_i^t +\alpha_t \sum_{j \in a(i)}  \xi_j^t \cdot \frac{1}{\tau^2 d_j^t}   w_j^t  \\
&= \beta_i^t w_i^t + \sum_{j \in a(i)}\beta_j^t w_j^t,
\end{aligned}
\end{equation}
where $\beta_i^t + \sum_j \beta_j^t=1$. That is, $z_i^t$ is a linear combination of all personalized models. Let $\alpha_t=1$, so we can obtain
\begin{equation}\label{omega j}
	\begin{aligned}
		\beta_j^t &= \left(\frac{1}{|a(i)|} - \frac{\exp(d_j^t)}{\sum_{j \in a(i)}\exp(d_j^t)} \right)\frac{1}{\tau^2 d_j^t} \\
		&= \left(\frac{1}{N-1} - \text{softmax}(d_j^t) \right)\frac{1}{\tau^2 d_j^t}.
	\end{aligned}
\end{equation}
Obviously, $\frac{1}{\tau^2 d_j^t}>0$. Therefore,  $\beta_j^t > 0$ when $\frac{1}{|a(i)|} - \frac{\exp(d_j^t)}{\sum_{j \in a(i)}\exp(d_j^t)}>0$, and vise versa. Meanwhile, according to Eq.~\eqref{client optimize}, when $\beta_j^t>0$, $w_i^t$ should approach $w_j^t$. When $\beta_j^t<0$, $w_i^t$ should depart from $w_j^t$. This proves Lemma 1.
\end{proof}
In the proof, it is demonstrated that the global parameter $z_i^t$ is a linear combination of the personalized model $w_i^t$. This property ensures that each client $i$ cannot directly infer the specific personalized models of other clients, such as $w_j^t$ for $j \in [1, N]$, from the received global parameter $z_i^t$. This protects the privacy. We can also see that the more similar the two models are, the smaller $d_j^t$ is and the larger $\beta_j^t$ is. $w_i^t$ should be closer to $w_j^t$.

\begin{remark}
    By Lemma \ref{lemma1}, we can know that if the personalized models of two clients are significantly different, then $\text{softmax}(d_j) > \frac{1}{N-1}$ and $w_i$ departs from $w_j$ in the training process. Otherwise, if $w_i$ is similar to $w_j$, then $\text{softmax}(d_j) < \frac{1}{N-1}$ and $w_i$ approaches $w_j$.
\end{remark}

\section{DiversiFed: Convergence Analysis}
In this section, we analyze the convergence properties of DiversiFed. The incremental proximal optimization framework in section \ref{Incremental Optimization Method} is commonly used. Previous work \cite{huang2021personalized} has shown that the loss can converge when the loss function is convex or nonconvex. However, there has not been a demonstration of the gap between the solution obtained using the incremental proximal optimization framework and the solution of the original problem. For completeness, in this paper, we aim to further demonstrate that the solution obtained DiversiFed can converge to the optimal solution of the original problem when the loss function is strongly convex. Before proceeding, we provide the overall optimization objective of DiversiFed.

According to the training objective of each client defined in Eq.\eqref{loss function}, the optimization objective of DiversiFed is
\begin{linenomath}
\begin{equation}\label{overall loss function}
	L(W) = L_{e}(W) + \lambda  L_{d}(W),
\end{equation}
\end{linenomath}
where $W=[w_1, w_2, ..., w_N]$ is a matrix whose columns are all the personalized models, $L_{e}(W)=\sum_{i}L_{e}(w_i)$ is the sum of loss function on the local data and $L_{d}(W)=\sum_{i}L_{d}(w_i)$ is the sum of model distance loss defined in Eq.~\eqref{contrastive loss}.

As we described in section~\ref{Incremental Optimization Method}, we first optimize 
\begin{linenomath}
\begin{equation}\label{overall server optimize}
	Z^{t} = W^{t} - \alpha_t \nabla_{W^{t}} L_{d}(W^{t})
\end{equation}
\end{linenomath}
on the server-side. After getting $Z^t$, we optimize 
\begin{linenomath}
\begin{equation}\label{overall client optimize}
	W^{t+1} = \arg \min_{W} \{L_{e}(W) + \frac{\lambda}{2\alpha_t}||W - Z^{t}||^2\}
\end{equation}
\end{linenomath}
on the client-side. Next, we give the convergence analysis of DiversiFed when $L$ is strongly convex.

\begin{assumption} \label{bound gradient assump}
    (Bounded gradient). The gradient of $L_{e}$ is bounded with $||\nabla L_{e}|| \le C$ and the gradient of $L_{d}$ is bounded with $||\nabla L_{d}|| \le \frac{C}{\lambda}$, where $C$ is a constant.
\end{assumption}

\begin{assumption} \label{strong convex assump}
    (Strong convexity). $L_{e}$ is strongly convex with parameter $\mu > 0$, i.e., $\forall W, W'$, we have
    \begin{linenomath}
    \begin{equation}
        L_{e}(W) - L_{e}(W') \ge \left<\nabla L_{e}(W'), W-W' \right> + \frac{\mu}{2} ||W - W'||^2. \nonumber
    \end{equation}
    \end{linenomath}
\end{assumption}

\begin{theorem}\label{convextheorem}
    (Convex DiversiFed's convergence). Let Assumption \ref{bound gradient assump} and Assumption \ref{strong convex assump} hold, if $L_{d}$ is convex and $\alpha_1 = ... = \alpha_T = \frac{1}{\sqrt{T}}$, then $W^t$ obtained by DiversiFed satisfies
    % \begin{linenomath}
    % \begin{equation}
    %     (a) \min_{0 \le t \le T-1} L(W^t) - L(W^*) \le \frac{\lambda}{2\sqrt{T}}||W^0-W^*||^2 + \frac{5C^2}{2 \lambda \sqrt{T}}, \nonumber
    % \end{equation}
    % \end{linenomath}
    \begin{linenomath}
    \begin{equation}
        \min_{0 \le t \le T-1} ||W^t - W^*||^2 \le \frac{\lambda}{\mu \sqrt{T}}||W^0-W^*||^2 + \frac{5C^2}{\mu \lambda \sqrt{T}}, \nonumber
    \end{equation}
\end{linenomath}
    where $W^*$ denotes the optimal solution of problem \eqref{overall loss function} and $T > 0$ denotes the maximum communication round.
\end{theorem}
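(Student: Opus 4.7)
The plan is a standard ``one-step descent plus telescoping'' analysis of the incremental proximal recursion. First I would convert the proximal client update~\eqref{overall client optimize} into an explicit step by writing down its first-order optimality condition, $\nabla L_{e}(W^{t+1}) + (\lambda/\alpha_t)(W^{t+1}-Z^{t})=0$, and combining with the server step~\eqref{overall server optimize} to obtain the single recursion
\begin{equation*}
W^{t+1} = W^{t} - \alpha_t\,\nabla L_{d}(W^{t}) - \frac{\alpha_t}{\lambda}\,\nabla L_{e}(W^{t+1}),
\end{equation*}
which will drive the rest of the argument. An immediate consequence of Assumption~\ref{bound gradient assump} is the one-step displacement bound $\|W^{t+1}-W^{t}\|\le 2\alpha_t C/\lambda$, and hence the residual bound $\|\alpha_t\nabla L_{d}(W^{t}) + (\alpha_t/\lambda)\nabla L_{e}(W^{t+1})\|^{2}\le 4\alpha_t^{2}C^{2}/\lambda^{2}$.

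Next I would derive a per-iteration descent inequality by expanding $\|W^{t+1}-W^{*}\|^{2}$ via the recursion. Three ingredients enter. (i) Convexity of $L_{d}$ turns $-2\alpha_t\langle W^{t}-W^{*},\nabla L_{d}(W^{t})\rangle$ into $-2\alpha_t(L_{d}(W^{t})-L_{d}(W^{*}))$. (ii) For the $L_{e}$ cross term I would split $W^{t}-W^{*} = (W^{t+1}-W^{*}) + (W^{t}-W^{t+1})$; strong convexity (Assumption~\ref{strong convex assump}) applied to the first piece extracts both the function gap $L_{e}(W^{t+1})-L_{e}(W^{*})$ and a negative quadratic $-(\mu\alpha_t/\lambda)\|W^{t+1}-W^{*}\|^{2}$, while the second piece is controlled by Cauchy--Schwarz together with the displacement bound above. (iii) The squared-norm residual is absorbed using the bound already noted. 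To fuse the two separate function-value terms into $L(W^{t+1})-L(W^{*})$, I would use the Lipschitz-type inequality $|L_{d}(W^{t+1})-L_{d}(W^{t})|\le (C/\lambda)\|W^{t+1}-W^{t}\|$, which again follows from Assumption~\ref{bound gradient assump}; the mismatch is absorbed into an $O(\alpha_t^{2}C^{2}/\lambda^{2})$ remainder. Careful bookkeeping collects all $O(\alpha_t^{2})$ constants into a single factor of $5$.

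Finally, strong convexity of the full objective $L=L_{e}+\lambda L_{d}$ (which inherits modulus $\mu$ from $L_{e}$ since $L_{d}$ is convex) in the form $L(W^{t+1})-L(W^{*})\ge(\mu/2)\|W^{t+1}-W^{*}\|^{2}$ gives the clean recurrence
\begin{equation*}
\frac{\mu\alpha_t}{\lambda}\,\|W^{t+1}-W^{*}\|^{2} \;\le\; \|W^{t}-W^{*}\|^{2} - \|W^{t+1}-W^{*}\|^{2} + \frac{5\,\alpha_t^{2}C^{2}}{\lambda^{2}}.
\end{equation*}
Summing over $t=0,\dots,T-1$ telescopes the right-hand side to $\|W^{0}-W^{*}\|^{2} + (5C^{2}/\lambda^{2})\sum_t\alpha_t^{2}$; with $\alpha_t = 1/\sqrt{T}$ we have $\sum_t\alpha_t^{2} = 1$, and lower-bounding the left-hand side by $(\mu/(\lambda\sqrt{T}))\cdot T\cdot \min_{0\le t\le T-1}\|W^{t}-W^{*}\|^{2}$ delivers exactly the claimed rate. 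The main obstacle I anticipate is the bookkeeping needed to pin the constant down to $5$: because $\nabla L_{d}$ is evaluated at $W^{t}$ while $\nabla L_{e}$ is evaluated at $W^{t+1}$, the textbook convex-descent lemma does not apply off the shelf, and the displacement bound $\|W^{t+1}-W^{t}\|\le 2\alpha_t C/\lambda$ must be recycled carefully across the cross term, the Lipschitz-type term, and the residual rather than invoked loosely each time.
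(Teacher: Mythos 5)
Your overall route is the right one, and it is the standard incremental--proximal analysis this theorem is built on: the optimality condition of \eqref{overall client optimize} turning the client step into an implicit gradient step, the displacement bound $\|W^{t+1}-W^t\|\le 2\alpha_t C/\lambda$ from Assumption~\ref{bound gradient assump}, convexity of $L_{d}$ at $W^t$ and strong convexity of $L_{e}$ at $W^{t+1}$ for the two cross terms, the Lipschitz-type fusion of $L_{d}(W^t)$ into $L_{d}(W^{t+1})$, and the telescoping with $\sum_t\alpha_t^2=1$ all go through and deliver the $\mathcal{O}(1/\sqrt{T})$ rate in the stated form.

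The one step that, as written, does not deliver the theorem is the bookkeeping of the $\mathcal{O}(\alpha_t^2)$ remainders --- and you correctly flagged this as the danger point, but the specific recipe you give cannot be made to yield $5$. The fusion term alone already costs $2\alpha_t\cdot\frac{C}{\lambda}\cdot\frac{2\alpha_t C}{\lambda}=\frac{4\alpha_t^2C^2}{\lambda^2}$, so the cross-term correction and the squared residual must together contribute only $\frac{\alpha_t^2C^2}{\lambda^2}$; yet bounding the correction $-\frac{2\alpha_t}{\lambda}\langle\nabla L_{e}(W^{t+1}),W^t-W^{t+1}\rangle$ by Cauchy--Schwarz gives $\frac{4\alpha_t^2C^2}{\lambda^2}$ and adding $\|W^{t+1}-W^t\|^2\le\frac{4\alpha_t^2C^2}{\lambda^2}$ separately gives another $\frac{4\alpha_t^2C^2}{\lambda^2}$, for a total constant of $12$. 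The missing idea is that these two quantities cancel almost exactly because the client step is proximal: substituting $W^{t+1}-W^t=-\alpha_t\nabla L_{d}(W^t)-\frac{\alpha_t}{\lambda}\nabla L_{e}(W^{t+1})$ and expanding,
\begin{align*}
&-\tfrac{2\alpha_t}{\lambda}\bigl\langle \nabla L_{e}(W^{t+1}),\,W^t-W^{t+1}\bigr\rangle+\|W^{t+1}-W^t\|^2\\
&\quad=\alpha_t^2\|\nabla L_{d}(W^t)\|^2-\tfrac{\alpha_t^2}{\lambda^2}\|\nabla L_{e}(W^{t+1})\|^2\;\le\;\tfrac{\alpha_t^2C^2}{\lambda^2},
\end{align*}
so the explicit server half-step contributes one unit of $\frac{\alpha_t^2C^2}{\lambda^2}$, the implicit client half-step contributes a nonpositive amount, and the fusion term contributes four units --- exactly $5$. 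The cleanest way to organize this is to analyze the two half-steps separately: $\|Z^t-W^*\|^2\le\|W^t-W^*\|^2-2\alpha_t\bigl(L_{d}(W^t)-L_{d}(W^*)\bigr)+\frac{\alpha_t^2C^2}{\lambda^2}$ for \eqref{overall server optimize}, followed by the firmly-nonexpansive inequality $\|W^{t+1}-W^*\|^2\le\|Z^t-W^*\|^2-\frac{2\alpha_t}{\lambda}\langle\nabla L_{e}(W^{t+1}),W^{t+1}-W^*\rangle$ for \eqref{overall client optimize}, which makes the $-\frac{\alpha_t^2}{\lambda^2}\|\nabla L_{e}(W^{t+1})\|^2$ appear for free. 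With that repair the rest of your argument (including the harmless off-by-one between $\min_{1\le t\le T}$ and $\min_{0\le t\le T-1}$ in the final step) is sound.
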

Theorem \ref{convextheorem} illustrates that when $L$ is strongly convex, DiversiFed can converge to the optimal solution of the original problem \eqref{overall loss function} by using the proximal incremental optimization method based on Eq.~\eqref{overall server optimize} and Eq.~\eqref{overall client optimize}, with convergence rate of $\mathcal O(\frac{1}{\sqrt{T}})$. Please refer to the supplemental material for proof.

\section{Experiments}
In this section, we validate the performance of DiversiFed through experiments. First, we compare DiversiFed with two baseline methods and nine state-of-the-art (SOTA) methods, including Per-FedAvg \cite{fallah2020personalized}, pFedMe \cite{t2020personalized}, FedAMP \cite{huang2021personalized}, FedFomo \cite{zhang2020personalized}, APPLE \cite{ijcai2022p301}, \textcolor{black}{FedRep \cite{collins2021exploiting}, FedProto \cite{tan2022fedproto}, FedRoD \cite{chen2022on}, and FedCAC \cite{wu2023bold}}. For a full comparison, we conduct experiments on three benchmark datasets under varying degrees of non-IID settings. We also conduct experiments on a medical dataset to simulate the situation where multiple distributed hospital institutions participate in FL training. Second, we change the value of $\lambda$ in the loss function to verify the effect of the model distance loss on the model training process. Thirdly, we verify the effect of the temperature coefficient $\tau$ in the model distance loss function on personalized models' pulling together and pushing apart effects. Based on this, we verify the relationship between the optimal value of $\tau$ and the degree of non-IID, and give the selection strategy of $\tau$. Fourthly, we verify the robustness of DiversiFed to partial client participation problems. Finally, we illustrate the effect of FL hyperparameters on DiversiFed.

\subsection{Dataset Settings} \label{dataset setting}

\begin{figure}[tb]
	\centering
	\subfloat[$\alpha=0.01$]{
		%\label{fig:subfig:localfinetuning} %% 第一幅图的标签
		\includegraphics[width=0.48\linewidth]{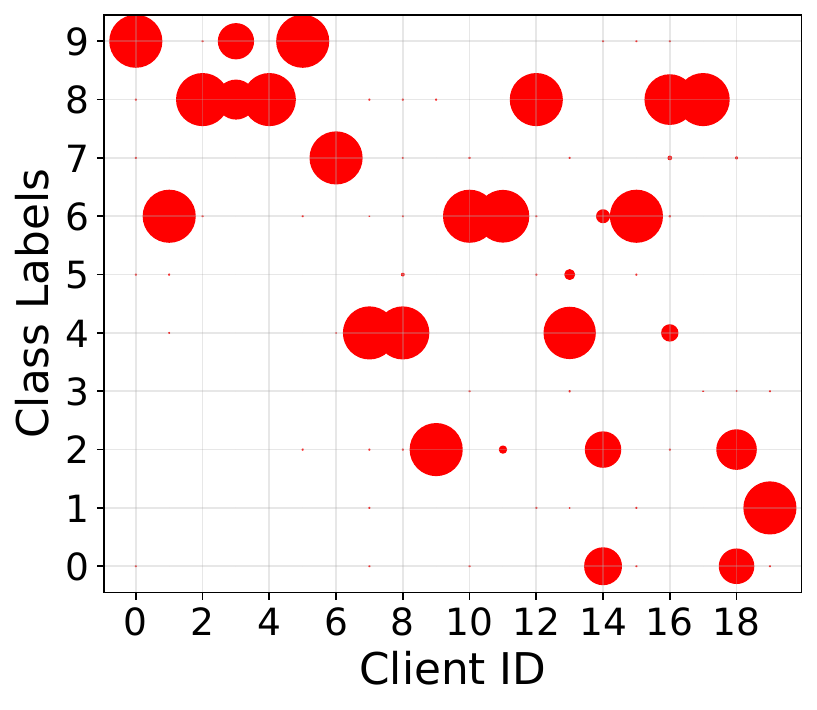}}
		% \hspace{-0.2in} 
	\subfloat[$\alpha=0.1$]{
		%\label{$alpha=0.01$} %% 第二幅图的标签
		\includegraphics[width=0.48\linewidth]{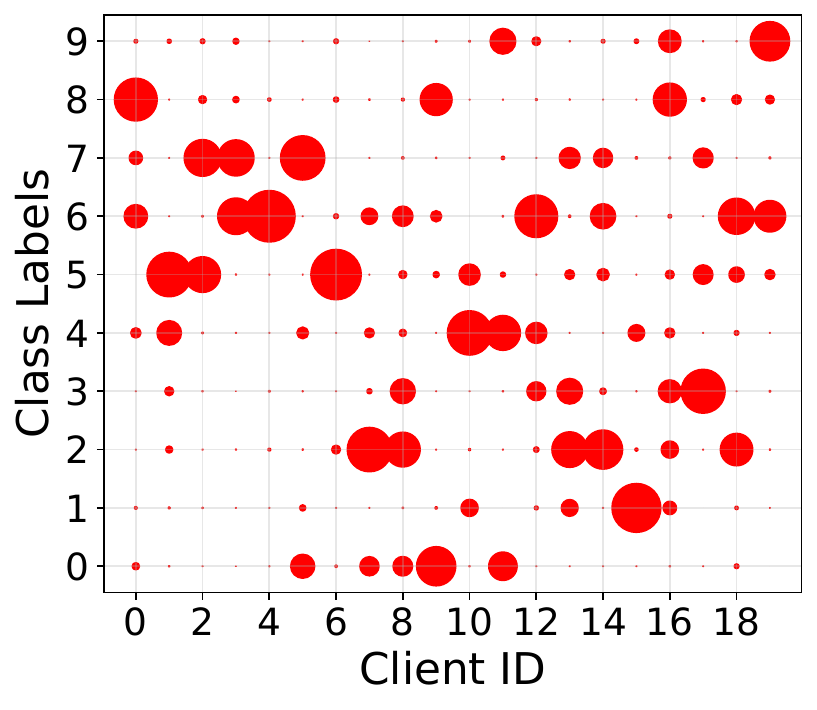}}
	   % \vspace{-0.3in}
    \quad
	\subfloat[$\alpha=0.5$]{
		%\label{$alpha=0.01$} %% 第二幅图的标签
		\includegraphics[width=0.48\linewidth]{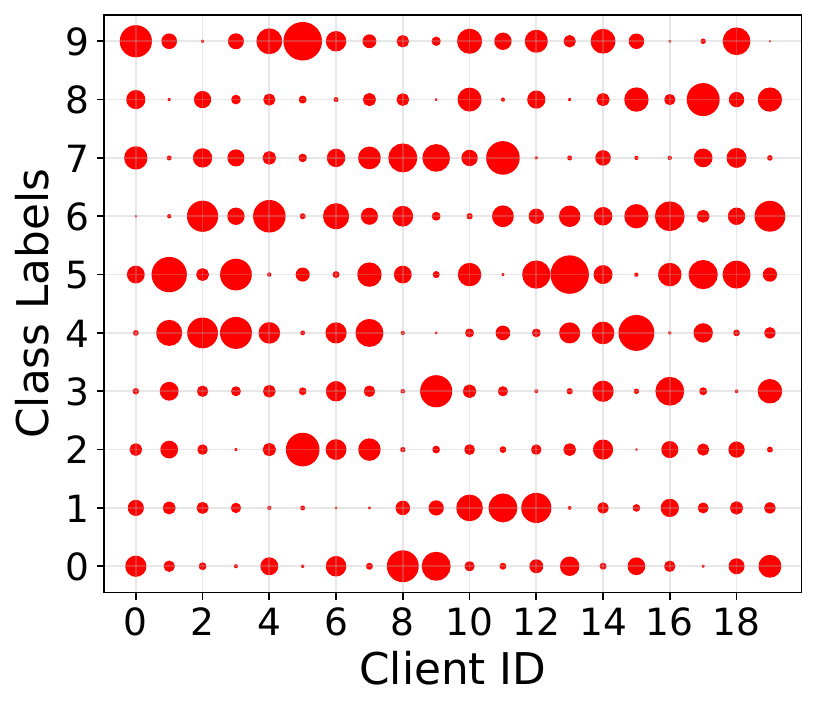}}
	    % \hspace{-0.2in} 
	\subfloat[$\alpha=1.0$]{
		%\label{$alpha=0.01$} %% 第二幅图的标签
		\includegraphics[width=0.48\linewidth]{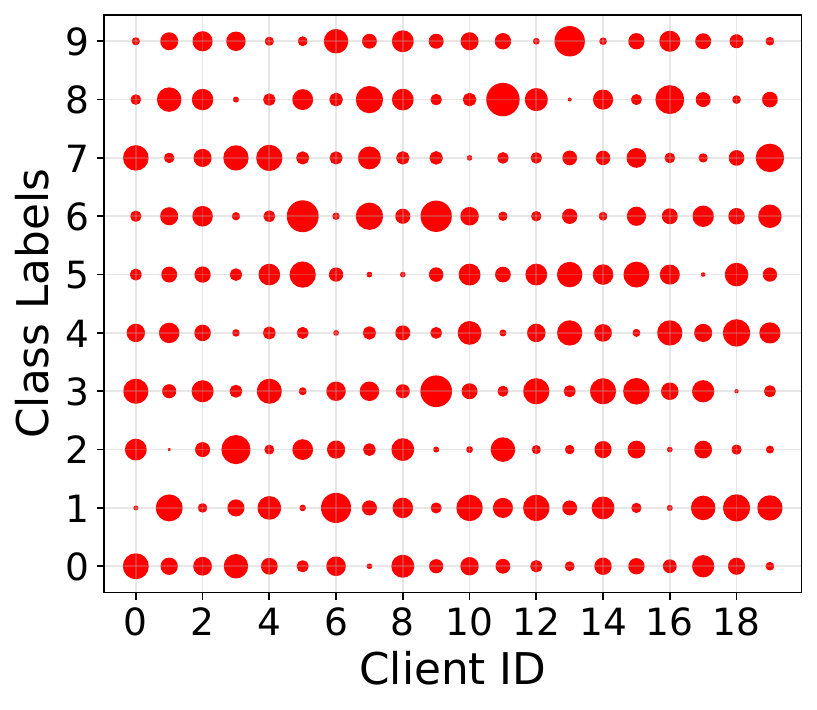}}
	
	\caption{This figure illustrates the data allocation for each client under different $\alpha$ according to Dirichlet distribution. The horizontal axis represents the client ID and the vertical axis represents the data class label index. Red dots represent the data assigned to clients. The larger the dot is, the more data the client has in this class.}
	\label{fig:dirichlet example}
\end{figure}

To thoroughly validate the effectiveness of DiversiFed, we utilize three natural image datasets: FMNIST (Fashion-MNIST) \cite{xiao2017fashion}, CIFAR-10 \cite{krizhevsky2010cifar}, CIFAR-100 \cite{krizhevsky2009learning}, and 9-class colorectal cancer image dataset called PathMNIST \cite{yang2021medmnist} to conduct experiments. To examine the performance of DiversiFed under different non-IID scenarios, for natural image datasets, we employ two commonly used non-IID data partitioning methods to generate datasets for each client. For the medical image dataset, we employ a practical non-IID setup to simulate the scenario where multiple hospitals contribute data.

\textbf{Pathological non-IID:} \ This setting, initially proposed by \cite{mcmahan2017communication}, is widely used in FL research. In this setup, each client is assigned two specific classes of data, with an equal number of samples from each class.

\textbf{Dirichlet non-IID:} \ This is another widely used non-IID setting in the research of FL \cite{hsu2019measuring,lin2020ensemble,kim2022multi,wu2022pfedgf}. In this setting, the data of different clients are drawn $q \sim Dir(\alpha p)$ from Dirichlet distribution, where $p$ represents the prior distribution of all classes, and $\alpha > 0$ is a hyperparameter controlling the identicalness among clients. To facilitate intuitive understanding, in Fig.~\ref{fig:dirichlet example}, we visualize the result of data partitioning in the scenario of the 10-classification task and 20 clients. As the value of $\alpha$ increases, the distribution of clients becomes more similar, indicating a weaker degree of non-IID.

\textbf{Practical non-IID:} \ Due to different regions and diagnostic expertise, different hospitals may have different numbers of samples of different categories for the same disease. Additionally, hospitals within the same region or of similar expertise tend to have more similar data distributions. To simulate this scenario, in this setup, 80\% of each client's data are randomly sampled from the three dominant classes, while the remaining 20\% are randomly sampled from the non-dominant classes. We divided the clients into 3 groups on average, and the clients within each group have the same dominant category. More specifically, the dominant classes of the three groups are $\{0, 1, 2\}$, $\{3,4,5\}$, and $\{6,7,8\}$.

To highlight the effectiveness of FL collaboration, we allocate a small amount of training data to each client. Specifically, for FMNIST, we assign 300 training samples to each client. For CIFAR-10, CIFAR-100, and PathMNIST, we allocate 500 training samples to each client. Additionally, for all datasets, we allocate 100 test samples to each client, with the same label distribution as their respective training data.

\subsection{Comparison Methods and Implementation Details} \label{experiment details}
We compare our method with nine SOTA methods to verify the superiority of our method. They are Per-FedAvg \cite{fallah2020personalized}, pFedMe \cite{t2020personalized}, FedAMP \cite{huang2021personalized}, FedFomo \cite{zhang2020personalized}, APPLE \cite{ijcai2022p301}, \textcolor{black}{FedRep \cite{collins2021exploiting}, FedProto \cite{tan2022fedproto}, FedRoD \cite{chen2022on}, and FedCAC \cite{wu2023bold},} all of which are PFL methods. Per-FedAvg is a meta-learning-based approach. In this approach, all clients work together to train a meta-model first, and then, each client fine-tunes this meta-model to get a personalized model.  
pFedMe is a model-regularization-based method. It constrains personalized models to be closer to their average model. FedAMP, FedFomo, APPLE, and our method can all be regarded as personalized-weight-aggregation-based methods. When aggregating models to generate the personalized model for each client, FedAMP assigns higher weights to models that are similar in parameter space. FedFomo assigns higher weights to those that perform better on the validation dataset. APPLE learns a directed relationship vector to generate weights. Models of clients with similar data distributions are assigned higher weights. \textcolor{black}{FedRep proposes to personalize the classifier and alternately trains the feature extractor and classifier. FedProto exchanges knowledge by aggregating feature centers rather than models. FedRoD introduces two classifiers. A personalized classifier fits the local data distribution, and a shared balanced classifier enables client knowledge exchange. FedCAC enables similar clients to collaborate on parameters that are sensitive to non-IID, while all clients collaborate on parameters that are insensitive to non-IID.}
We also introduce two baseline methods: FedAvg \cite{mcmahan2017communication} and ``Separate". FedAvg is a classic FL algorithm, in which all clients train a global model together. Here, ``Separate" means that all clients are training their models separately without any collaboration. Taking this as a baseline can effectively reflect the effectiveness of different PFL methods to utilize the knowledge of other clients.

For the implementation of the methods, we utilize existing online code repositories as the basis \footnote{Per-FedAvg and pFedMe are implemented with code from \href{github.com/CharlieDinh/pFedMe}{https://github.com/CharlieDinh/pFedMe}. FedAMP, FedFomo, APPLE, FedRep, FedProto, and FedRoD are implemented with code from \href{github.com/TsingZ0/PFL-Non-IID}{https://github.com/TsingZ0/PFL-Non-IID. FedCAC is implemented with code from https://github.com/kxzxvbk/Fling.}}. To ensure fair comparisons, we follow the optimal hyperparameter settings specified in the respective papers for each method. We run all experiments five times and report the mean accuracy along with the standard deviations.

To evaluate the generality of DiversiFed, we conduct experiments using various model structures. On the FMNIST, we utilize a multilayer perceptron with a hidden layer that has 64 units. On the CIFAR-10 and PathMNIST, we utilize a CNN network with two $5 \times 5$ convolutional layers (the first layer has 6 channels, the second layer has 16 channels, and each layer is followed by a $2 \times 2$ max pooling layer) and three fully connected layers each with 400, 120, and 84 units and ReLU activation. On the CIFAR-100, we leverage the ResNet \cite{he2016deep} architecture to experiment. Because of the small amount of client data, it is easy to overfit when using a large deep network. We utilize a relatively small ResNet-8 for the experiments.

In our experiments, unless otherwise specified, we set the total number of distributed clients in FL to 40 (i.e., $N=40$). We run the experiments for 500 communication rounds (i.e., $T=500$) to achieve full convergence. Each client performs local updates for 10 epochs, with a batch size of 100. We evaluate the uniform averaging test accuracy across all clients in each communication round and select the best accuracy as the final result.
On the server-side, we set $\alpha_t=1$. On the client-side, we 
adopt the widely used ADAM \cite{kingma2014adam} optimization algorithm to perform local updates. Here, the local learning rate equals 0.001.

\subsection{Comparing with SOTA on the Pathological non-IID Setting}

\begin{table}[tb]
% \setlength{\abovecaptionskip}{0.cm}
% \setlength{\belowcaptionskip}{-0.cm}
	%	\renewcommand\arraystretch{1.25}
	% \caption{Comparison with state-of-the-art methods under Pathological non-IID.}
	% \label{pathological noniid}
 \caption{Comparison with state-of-the-art methods under Pathological non-IID.}
	\vskip 0in
	\begin{center}
		\begin{small}
			% \begin{sc}
				\begin{tabular}{lcccr}
					\toprule
					Methods & FMNIST & CIFAR-10 & CIFAR-100 \\
					\midrule
					FedAvg & 83.55 $\pm$ 1.16 & 51.19 $\pm$ 1.05 & 23.55 $\pm$ 1.97 \\
					Separate & 96.10 $\pm$ 0.26 & 82.86 $\pm$ 0.73 & 91.45 $\pm$ 0.45 \\
					\midrule
					Per-FedAvg & 95.50 $\pm$ 0.52 & 83.17 $\pm$ 0.69 & 82.30 $\pm$ 0.80 \\
					pFedMe & 96.23 $\pm$ 1.47 & 85.84 $\pm$ 0.78 & 82.52 $\pm$ 1.64 \\
					FedAMP & 96.30 $\pm$ 1.10 & 85.17 $\pm$ 0.83 & 92.16 $\pm$ 0.81 \\
					FedFomo & 96.22 $\pm$ 0.45 & 84.88 $\pm$ 0.35 & 92.38 $\pm$ 0.61 \\
                    APPLE & 95.95 $\pm$ 1.73 & 85.45 $\pm$ 0.92 & 92.23 $\pm$ 0.69 \\
                    \textcolor{black}{FedRep} & \textcolor{black}{96.66 $\pm$ 0.50} & \textcolor{black}{85.07 $\pm$ 1.22} & \textcolor{black}{88.42 $\pm$ 1.21} \\
                    \textcolor{black}{FedProto} & \textcolor{black}{96.24 $\pm$ 0.64} & \textcolor{black}{84.74 $\pm$ 1.02} & \textcolor{black}{92.85 $\pm$ 0.88} \\
                    \textcolor{black}{FedRoD} & \textcolor{black}{\textbf{96.74 $\pm$ 0.46}} & \textcolor{black}{83.37 $\pm$ 1.26} & \textcolor{black}{88.29 $\pm$ 1.37} \\
                    \textcolor{black}{FedCAC} & \textcolor{black}{95.96 $\pm$ 1.74} & \textcolor{black}{84.15 $\pm$ 0.98} & \textcolor{black}{93.02 $\pm$ 0.50} \\
					\midrule
					DiversiFed & 96.47 $\pm$ 0.83 & \textbf{88.04 $\pm$ 0.72} & \textbf{93.19 $\pm$ 0.10} \\
					\bottomrule
				\end{tabular}
			% \end{sc}
		\end{small}
	\end{center}
 \label{pathological noniid}
	\vskip -0.0in
\end{table}

The experimental results in the pathological non-IID scenario are presented in Table \ref{pathological noniid}. In this setting, each client performs a simple binary classification task locally. This can be observed from the results of the ``Separate" baseline, where clients achieve relatively high accuracy even without collaboration. On the simple FMNIST dataset, the accuracy improvement brought by various PFL methods is not significant. In this scenario, DiversiFed achieves competitive results compared to the state-of-the-art (SOTA) methods.

On the challenging CIFAR-10 and CIFAR-100 datasets, obtaining assistance from other clients becomes crucial for performance improvement. Various PFL methods demonstrate substantial accuracy gains compared to the ``Separate" baseline. However, we also observe that in highly non-IID data distributions, seeking help from certain clients can have a negative impact. On the CIFAR-100 dataset, where clients have very different local tasks, Per-FedAvg and pFedMe, which rely on a global model for assistance, suffer from significant performance degradation, even performing worse than the ``Separate" baseline. FedAMP, FedFomo, and APPLE mitigate this issue by selectively learning from clients with similar data distributions, leading to improved model performance. \textcolor{black}{FedRep and FedRoD personalize the classifier to reduce the influence of non-IID, thus obtaining better performance than Per-FedAvg and pFedMe. However, they still rely on a global feature extractor, which is greatly influenced in extreme non-IID scenarios. FedProto exchanges feature centers instead of model parameters, which can reduce the impact of non-IID data and obtain better model performance. FedCAC mitigates the influence of non-IID by allowing parameters of clients that are susceptible to non-IID to collaborate only with clients that have a similar data distribution. At the same time, it allows all clients to collaborate on parameters not easily influenced by non-IID, thus reducing the impact of non-IID while increasing the degree of collaboration.}  DiversiFed further facilitates collaboration among clients with different data distributions, resulting in the best overall model performance.

\subsection{Comparing with SOTA on the Dirichlet non-IID Setting}

We further carry out experiments in the Dirichlet non-IID scenario to evaluate the performance of DiversiFed. We select three values of $\alpha$ to represent different degrees of non-IID and investigate the impact of non-IID degree changes on different methods.

\begin{table*}[tb]
% \small
\caption{Comparison with state-of-the-art methods under Dirichlet non-IID On FMNIST, CIFAR-10, and CIFAR-100.}
% \footnotesize
\begin{center}
\begin{tabular}{@{}c|ccc|ccc|ccc@{}}
\toprule
       & \multicolumn{3}{c|}{FMNIST}              & \multicolumn{3}{c|}{CIFAR-10}              & \multicolumn{3}{c}{CIFAR-100}           \\ \midrule
Method & $\alpha=0.1$ & $\alpha=0.5$ & $\alpha=1.0$ & $\alpha=0.1$ & $\alpha=0.5$ & $\alpha=1.0$ & $\alpha=0.01$ & $\alpha=0.1$ & $\alpha=0.5$ \\ \midrule
FedAvg & 84.53$\pm$1.59 & 84.87$\pm$0.46 & 84.45$\pm$0.81 & 50.31$\pm$2.06 & 51.25$\pm$1.59 & 50.83$\pm$0.62 & 28.97$\pm$1.60 & 35.22$\pm$1.26 & 36.98$\pm$1.06 \\
Separate & 93.37$\pm$1.20 & 85.49$\pm$0.95 & 82.53$\pm$0.74 & 78.94$\pm$1.91 & 55.00$\pm$1.70 & 45.40$\pm$2.09 & 83.74$\pm$0.94 & 46.41$\pm$1.27 & 23.66$\pm$0.36 \\
% \midrule
Per-FedAvg & 92.58$\pm$0.49 & 87.58$\pm$0.47 & 86.11$\pm$0.65 & 79.45$\pm$2.99 & 59.10$\pm$0.83 & 54.31$\pm$0.90 & 63.08$\pm$2.39 & 24.85$\pm$1.31 & 17.53$\pm$0.56 \\
pFedMe & 94.53$\pm$0.83 & 89.33$\pm$0.83 & 87.31$\pm$0.14 & 80.21$\pm$1.01 & 61.01$\pm$1.78 & 54.90$\pm$1.44 & 75.33$\pm$2.70 & 45.02$\pm$1.49 & 32.02$\pm$1.54 \\
FedAMP & 94.74$\pm$0.85 & 89.38$\pm$1.10 & 86.97$\pm$0.50 & 81.29$\pm$0.97 & 63.40$\pm$1.40 & 57.34$\pm$1.19 & 84.47$\pm$0.95 & 52.73$\pm$0.81 & 37.32$\pm$0.31 \\
FedFomo & 94.70$\pm$0.65 & 89.64$\pm$0.55 & 87.26$\pm$1.12 & 80.23$\pm$0.85 & 60.71$\pm$0.75 & 56.01$\pm$1.61 & 84.52$\pm$1.56 & 50.86$\pm$1.24 & 26.05$\pm$0.57 \\
APPLE & 94.73$\pm$0.61 & 89.35$\pm$1.04 & 87.13$\pm$0.93 & 81.61$\pm$0.59 & 63.95$\pm$1.01 & 57.55$\pm$0.76 & 84.69$\pm$0.94 & 52.03$\pm$1.88 & 35.87$\pm$0.91 \\
\textcolor{black}{FedRep} & \textcolor{black}{94.84$\pm$0.72} & \textcolor{black}{89.89$\pm$0.40} & \textcolor{black}{87.11$\pm$0.86} & \textcolor{black}{81.00$\pm$2.42} & \textcolor{black}{59.45$\pm$0.97} & \textcolor{black}{51.71$\pm$1.99} & \textcolor{black}{81.98$\pm$1.53} & \textcolor{black}{52.11$\pm$1.11} & \textcolor{black}{28.32$\pm$1.29} \\
\textcolor{black}{FedProto} & \textcolor{black}{93.65$\pm$1.43} & \textcolor{black}{85.27$\pm$0.76} & \textcolor{black}{81.71$\pm$0.39} & \textcolor{black}{80.77$\pm$3.00} & \textcolor{black}{56.81$\pm$1.48} & \textcolor{black}{49.21$\pm$1.15} & \textcolor{black}{85.39$\pm$0.99} & \textcolor{black}{53.00$\pm$1.55} & \textcolor{black}{27.85$\pm$1.25} \\
\textcolor{black}{FedRoD} & \textcolor{black}{94.46$\pm$0.48} & \textcolor{black}{\textbf{90.55$\pm$0.47}} & \textcolor{black}{87.75$\pm$0.91} & \textcolor{black}{80.67$\pm$2.36} & \textcolor{black}{63.28$\pm$0.27} & \textcolor{black}{56.23$\pm$1.45} & \textcolor{black}{84.26$\pm$1.07} & \textcolor{black}{53.02$\pm$0.64} & \textcolor{black}{37.83$\pm$1.75} \\
\textcolor{black}{FedCAC} & \textcolor{black}{94.87$\pm$0.89} & \textcolor{black}{90.23$\pm$0.40} & \textcolor{black}{\textbf{87.93$\pm$0.96}} & \textcolor{black}{80.32$\pm$4.04} & \textcolor{black}{59.40$\pm$0.56} & \textcolor{black}{55.59$\pm$1.71} & \textcolor{black}{85.79$\pm$0.73} & \textcolor{black}{53.45$\pm$1.16} & \textcolor{black}{37.98$\pm$1.26} \\
\midrule
DiversiFed & \textbf{95.13$\pm$1.37} & 89.20$\pm$0.56 & 86.90$\pm$0.47 & \textbf{83.72$\pm$1.24} & \textbf{65.32$\pm$1.14} & \textbf{58.83$\pm$1.44} & \textbf{86.80$\pm$0.93} & \textbf{56.64$\pm$0.86} & \textbf{40.42$\pm$0.71} \\
\bottomrule
\end{tabular}
\end{center}
\label{dirichlet noniid}
\end{table*}

The experimental results on three datasets are shown in Table \ref{dirichlet noniid}. From the results on FMNIST we can see that, as the degree of non-IID gradually decreases (i.e., $\alpha$ gradually increases), the performance of the ``Separate" baseline gradually decreases. This is because as $\alpha$ increases, each client possesses data from more classes, but with fewer samples per class, making the local classification task more challenging for each client. In this scenario, PFL methods demonstrate increasingly significant accuracy improvements compared to the ``Separate" baseline by leveraging knowledge from other clients. Specifically, when the data are highly non-IID (i.e., $\alpha = 0.1$), compared to Per-FedAvg and pFedMe, FedAMP, FedFomo, APPLE, \textcolor{black}{FedCAC,} and DiversiFed mitigate the impact of clients with more diverse data distributions by selectively learning from clients with similar distributions, leading to better model performance. DiversiFed further leverages information from clients with dissimilar data distributions to achieve the best model performance. However, as the degree of non-IID weakens (i.e., $\alpha = 0.5$ and $\alpha = 1$), the benefits of pushing models apart diminish, resulting in slightly lower performance of DiversiFed compared to other PFL methods. Nevertheless, DiversiFed still demonstrates competitive performance. Furthermore, as the degree of non-IID weakens, the advantage of pFedMe and FedRoD becomes more prominent, indicating that when the dataset is simpler and the degree of non-IID is weaker, learning more from other clients by being close in the parameter space can have a positive impact.

The results on CIFAR-10 and CIFAR-100 exhibit similar trends to the FMNIST results as the non-IID degree changes. However, CIFAR-10 and CIFAR-100 are more challenging datasets to learn from, making it even more crucial to effectively leverage the assistance of other clients. In this regard, DiversiFed outperforms other SOTA methods.

Interestingly, on the most challenging CIFAR-100 dataset, \textcolor{black}{especially when $\alpha = 0.5$, FedFomo, FedRep, and FedProto perform much worse than FedAMP, APPLE, FedRoD, FedCAC, and DiversiFed.} We conjecture that this is because FedFomo only chooses to collaborate with personalized models that perform better on its own validation set. In this scenario, due to the difficult classification task, all clients' personalized models perform poorly, making it challenging to find personalized models with superior performance for collaboration. \textcolor{black}{FedProto only aggregates feature centers, and the knowledge exchange among clients is little. FedRep personalizes the classifier so that the classifier cannot be helped by other clients, resulting overfitting of local data.} Moreover, FedAvg achieves higher accuracy than most PFL methods in this scenario. This further indicates that when the local task becomes more difficult, it becomes increasingly necessary for each client to collaborate with different clients to improve the model's generalization. DiversiFed can still achieve good model performance in such scenarios by effectively utilizing different client information.

In general, the experimental results on three datasets show that DiversiFed results in better model performance by learning from clients with diverse distribution, especially when the degree of non-IID is high. At the same time, when the degree of non-IID is weak, the superiority of DiversiFed will decrease, but it still shows competitive results with SOTA methods. We believe this is because when the degree of non-IID is weak, the difference between the optimal personalized models of clients is not so significant. At this time, pushing them apart brings little benefit. Therefore, our method is more suitable for scenarios with a high degree of non-IID.

\subsection{Comparing with SOTA on the Medical Dataset}
\begin{table*}[tb]
\caption{Comparison with state-of-the-art methods under Practical non-IID on PathMNIST.}
	\vskip 0in
	\begin{center}
		% \begin{normalsize}
  % \begin{small}
			% \begin{sc}
				\begin{tabular}{ccccccccccccc}
					\toprule
					Methods & FedAvg & Separate & Per-FedAvg & pFedMe & FedAMP & FedFomo & APPLE & \textcolor{black}{FedRep} & \textcolor{black}{FedProto} & \textcolor{black}{FedRoD} & \textcolor{black}{FedCAC} & Ours \\
					\bottomrule
                    Accuracy &\makecell{60.61 \\ $\pm$2.23} & \makecell{68.21 \\ $\pm$0.88} & \makecell{68.46 \\ $\pm$2.10} & \makecell{72.81 \\ $\pm$1.75} & \makecell{75.23 \\ $\pm$0.94} & \makecell{75.14 \\ $\pm$1.42} & \makecell{75.41 \\ $\pm$0.68} & \makecell{72.36 \\ $\pm$2.56} & \makecell{69.84 \\ $\pm$0.77} & \makecell{72.90 \\ $\pm$1.01} & \makecell{75.23 \\ $\pm$1.47} & \makecell{\textbf{76.28} \\ \textbf{$\pm$1.17}} \\
                    \bottomrule
				\end{tabular}
			% \end{sc}
   % \end{small}
		% \end{normalsize}
	\end{center}
 \label{medical dataset}
	\vskip -0.0in
\end{table*}
In this experiment, we simulate a scenario of distributed collaborative training models from multiple hospitals. We set $20$ clients (i.e., $N=20$). As depicted in Section \ref{dataset setting}, the clients are divided into $3$ groups with $6$, $6$, and $8$ clients in each group. The experiment results are shown in Table.~\ref{medical dataset}.

In this scenario, since clients can naturally group according to data distribution, the weighted aggregation-based methods are significantly superior to other methods. However, previous weighted aggregation-based methods (i.e., FedAMP, FedFomo, and APPLE) only focus on clients with similar data distribution. Clients in these methods mainly get help from clients within the same group, losing benefits other groups might bring. Our DiversiFed can also benefit from clients in other groups, thus obtaining the best performance.

\subsection{Effects of Hyperparameter $\lambda$}
\begin{figure}[tb]
% \setlength{\abovecaptionskip}{0.cm}
% \setlength{\belowcaptionskip}{-0.cm}
	% \begin{center}
 \centering
		\centerline{\includegraphics[width=\linewidth]{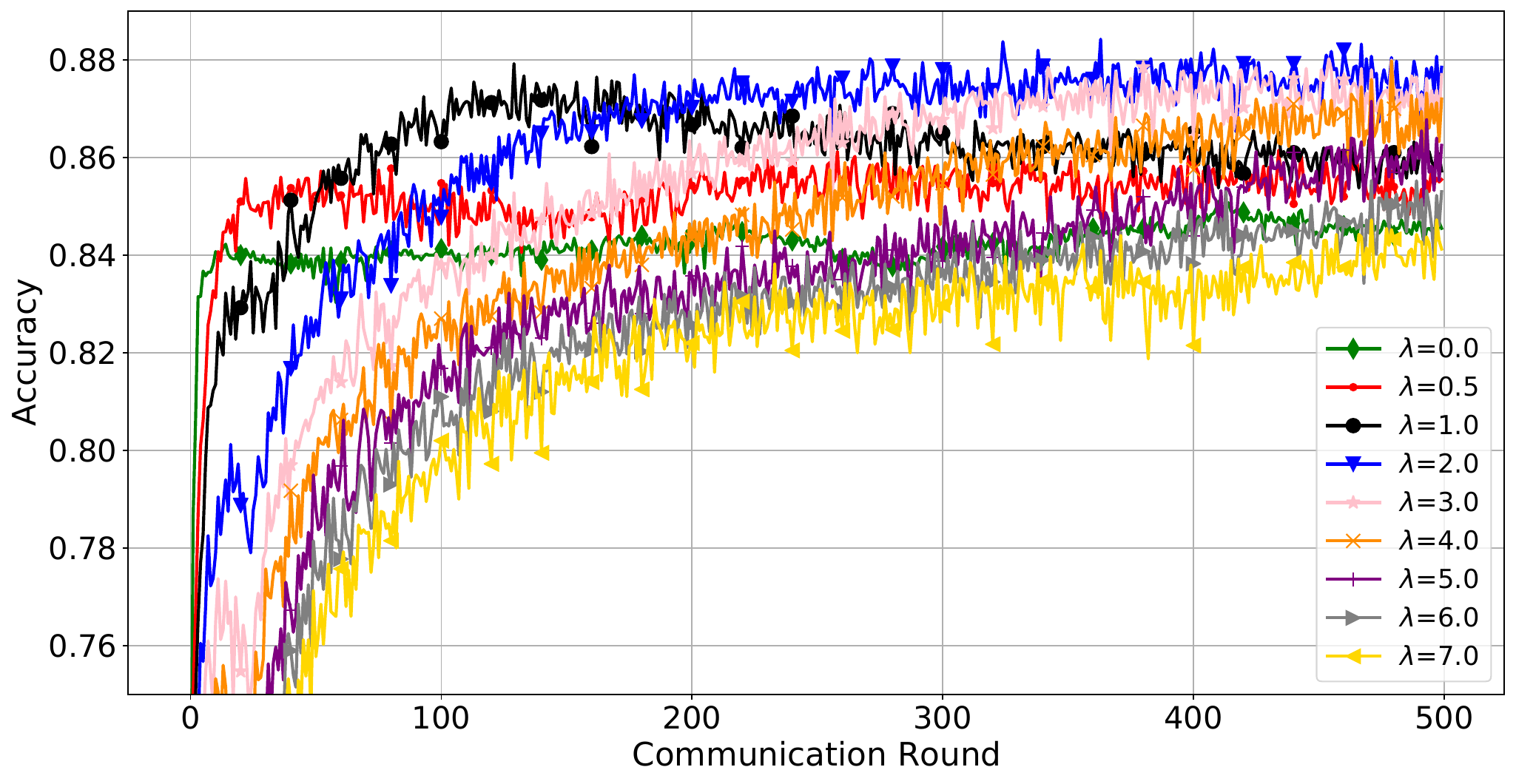}}
	% \end{center}
 \caption{Effect of $\lambda$ on the training process of DiversiFed on the CIFAR-10.}
		\label{effect of lambda}
\end{figure}

In this section, we investigate the effect of the hyperparameter $\lambda$ in Eq.~\eqref{loss function} on the model training process and final accuracy. We conduct this experiment under the pathological non-IID setting using the CIFAR-10 dataset. We sample $\lambda \in \{0, 0.5, 1, 2, 3, 4, 5, 6, 7\}$.

The experimental results are shown in Fig.~\ref{effect of lambda}. The green line (i.e., $\lambda=0$) serves as the baseline, indicating that clients only use cross-entropy loss. From the red line (i.e., $\lambda=0.5$) and the black line (i.e., $\lambda=1$), we can see that when $\lambda$ is small, the convergence speed of personalized models is fast, and it fully converges in about 150 rounds. However, in this case, the model is easy to overfit, resulting in the low testing accuracy of the final model. On the other hand, we can see that the final accuracy of the model will also be degraded when $\lambda$ is large. This is because the model is overly affected by $L_{d}$ in the training process and ignores characteristics of the local data distribution, resulting in underfitting of the local data and poor performance in the testing dataset. 

From the experimental results, we can conclude that introducing the model distance loss $L_{d}$ can effectively improve the generalization of the models and enhance collaboration among clients. The hyperparameter $\lambda$ controls the extent to which personalized models learn from other clients. In our experiments, we set $\lambda=2$ as it achieved a good balance between collaboration and preserving the characteristics of the local data distribution.

\subsection{Effects of $\tau$ on Model Distance}\label{effect of tau on distance}

\begin{figure}[tb]
% \setlength{\abovecaptionskip}{0.cm}
% \setlength{\belowcaptionskip}{-0.cm}
	% \centering
	\subfloat[]{
		\label{avg dis sim}
		\includegraphics[width=0.48\linewidth]{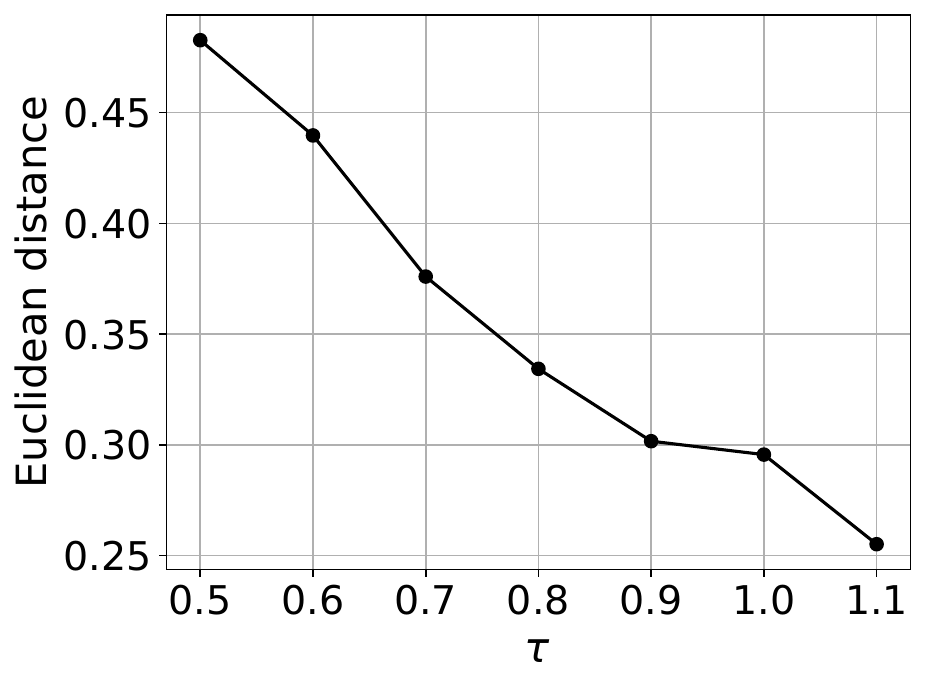}}
	\hspace{-0.0in} 
	\subfloat[]{
		\label{avg dis dissim}
		\includegraphics[width=0.465\linewidth]{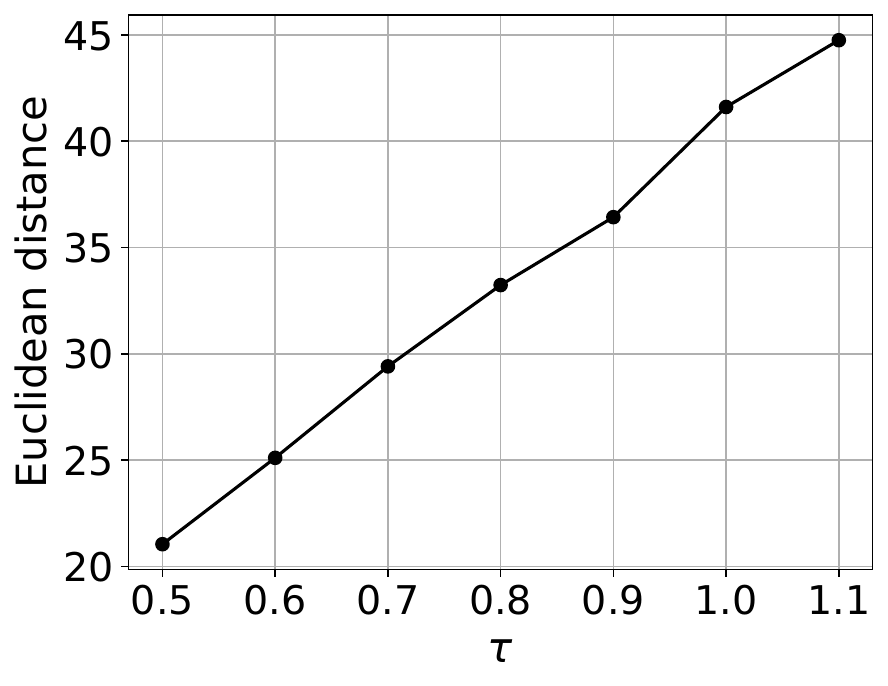}}
	\caption{Experimental results on CIFAR-10 to verify the relationship between $\tau$ and the average model distance. (a) is the average distance between models that are pulled together while (b) is the average distance between models that are pushed apart.}
	\label{fig:2} %% label for entire figure
\end{figure}

According to Eq.~\eqref{contrastive loss}, $\tau$ is a hyperparameter used for scaling model distance between personalized models. It affects the ``pulling together'' and ``pushing apart'' effects between models. In this section, we conduct experiments on the CIFAR-10 dataset and pathological non-IID setting to verify the influence of $\tau$ on the distance between personalized models during convergence. We set $\tau \in \{0.5,0.6,0.7,0.8,0.9,1.0,1.1\}$. 

For each $\tau$, when the algorithm converges, we calculate the distance in parameter space between all pairs of personalized models. By examining the order of magnitude of the distance between models, we can easily tell which personalized models are pulled together and which are pushed apart. We calculate the average distance between the personalized models pulled together and those pushed apart.

Fig.~\ref{avg dis sim} and Fig.~\ref{avg dis dissim} show the effects of $\tau$ on the ``pulling together'' and ``pushing apart'' effects of models, respectively. We observe that as $\tau$ increases, the average distance between similar models decreases, indicating a stronger ``pulling together" effect. Conversely, the average distance between dissimilar models increases, reflecting a stronger ``pushing apart" effect. Therefore, the hyperparameter $\tau$ can be adjusted to control the degree of interaction between models in DiversiFed.
%  in different scenarios.

\begin{table}[tb]
% \setlength{\abovecaptionskip}{0.cm}
% \setlength{\belowcaptionskip}{-0.cm}
	%	\renewcommand\arraystretch{1.25}
	% \caption{ Relationship between $\tau$ and non-IID degree on CIFAR-10.}
	% \label{tau and alpha on cifar10}
 \caption{Relationship between $\tau$ and non-IID degree on CIFAR-10.}
	\vskip 0in
	\begin{center}
		\begin{small}
			\begin{sc}
				\begin{tabular}{lcccr}
					\toprule
					& $\alpha=0.1$ & $\alpha=0.5$ & $\alpha=1.0$ \\
					\midrule
					$\tau=0.5$ & 81.80 $\pm$ 3.20 & 64.97 $\pm$ 1.80 & \textbf{59.29 $\pm$ 2.29} \\
					$\tau=0.6$ & 82.05 $\pm$ 2.71 & 65.02 $\pm$ 1.63 & 59.19 $\pm$ 2.79 \\
					$\tau=0.7$ & 82.39 $\pm$ 2.36 & \textbf{65.14 $\pm$ 1.78} & 58.70 $\pm$ 3.10 \\
					$\tau=0.8$ & 82.41 $\pm$ 1.81 & 64.83 $\pm$ 1.79 & 57.94 $\pm$ 2.97 \\
					$\tau=0.9$ & 82.41 $\pm$ 1.77 & 63.29 $\pm$ 2.62 & 56.60 $\pm$ 2.37 \\
					$\tau=1.0$ & \textbf{82.65 $\pm$ 2.06} & 62.02 $\pm$ 3.07 & 53.04 $\pm$ 1.52 \\
					$\tau=1.1$ & 82.62 $\pm$ 1.88 & 61.18 $\pm$ 1.72 & 51.45 $\pm$ 3.10 \\
					\bottomrule
				\end{tabular}
			\end{sc}
		\end{small}
	\end{center}
	\label{tau and alpha on cifar10}
	\vskip -0.0in
\end{table}
\begin{table}[tb]
% \setlength{\abovecaptionskip}{0.cm}
% \setlength{\belowcaptionskip}{-0.cm}
	%	\renewcommand\arraystretch{1.25}
	% \caption{ Relationship between $\tau$ and non-IID degree on FMNIST.}
	% \label{tau and alpha on fmnist}
 \caption{ Relationship between $\tau$ and non-IID degree on FMNIST.}
	\vskip 0in
	\begin{center}
		\begin{small}
			\begin{sc}
				\begin{tabular}{lcccr}
					\toprule
					& $\alpha=0.1$ & $\alpha=0.5$ & $\alpha=1.0$ \\
					\midrule
					$\tau=0.5$ & 95.36 $\pm$ 0.78 & 88.07 $\pm$ 1.64 & \textbf{87.44 $\pm$ 0.23} \\
					$\tau=0.6$ & 95.57 $\pm$ 0.70 & \textbf{88.64 $\pm$ 1.64} & 87.03 $\pm$ 0.67 \\
					$\tau=0.7$ & 95.60 $\pm$ 0.73 & 88.62 $\pm$ 1.57 & 87.04 $\pm$ 0.60 \\
					$\tau=0.8$ & \textbf{95.77 $\pm$ 0.76} & 88.53 $\pm$ 1.26 & 86.86 $\pm$ 0.56 \\
					$\tau=0.9$ & \textbf{95.77 $\pm$ 0.84} & 87.86 $\pm$ 1.29 & 86.75 $\pm$ 1.10 \\
					$\tau=1.0$ & 95.66 $\pm$ 0.76 & 87.55 $\pm$ 1.19 & 86.47 $\pm$ 1.05 \\
					$\tau=1.1$ & 95.58 $\pm$ 0.72 & 87.39 $\pm$ 1.27 & 86.00 $\pm$ 1.57 \\
					\bottomrule
				\end{tabular}
			\end{sc}
		\end{small}
	\end{center}
	\label{tau and alpha on fmnist}
	\vskip -0.0in
\end{table}
\begin{table}[tb]
% \setlength{\abovecaptionskip}{0.cm}
% \setlength{\belowcaptionskip}{-0.cm}
	%	\renewcommand\arraystretch{1.25}
	% \caption{ Relationship between $\tau$ and non-IID degree on CIFAR-100.}
	% \label{tau and alpha on cifar100}
 \caption{ Relationship between $\tau$ and non-IID degree on CIFAR-100.}
	\vskip 0in
	\begin{center}
		\begin{small}
			\begin{sc}
				\begin{tabular}{lcccr}
					\toprule
					& $\alpha=0.01$ & $\alpha=0.1$ & $\alpha=0.5$ \\
					\midrule
					$\tau=0.5$ & 85.38 $\pm$ 1.42 & 55.99 $\pm$ 1.92 & \textbf{36.93 $\pm$ 0.91} \\
					$\tau=0.6$ & 85.70 $\pm$ 1.54 & \textbf{56.44 $\pm$ 0.70} & 34.80 $\pm$ 0.94 \\
					$\tau=0.7$ & 85.86 $\pm$ 1.36 & 55.29 $\pm$ 0.05 & 33.52 $\pm$ 1.33 \\
					$\tau=0.8$ & 86.09 $\pm$ 1.28 & 54.40 $\pm$ 0.40 & 32.77 $\pm$ 1.46 \\
					$\tau=0.9$ & 86.13 $\pm$ 1.27 & 54.01 $\pm$ 0.42 & 32.87 $\pm$ 1.23 \\
					$\tau=1.0$ & \textbf{86.33 $\pm$ 1.35} & 53.98 $\pm$ 0.12 & 31.57 $\pm$ 1.71 \\
					$\tau=1.1$ & 86.25 $\pm$ 0.88 & 53.48 $\pm$ 0.24 & 31.04 $\pm$ 1.93 \\
					\bottomrule
				\end{tabular}
			\end{sc}
		\end{small}
	\end{center}
	\label{tau and alpha on cifar100}
	\vskip -0.0in
\end{table}

\subsection{Relationship Between $\tau$ and non-IID Degree}\label{relationship tau and noniid}
In the previous section, we verify the influence of $\tau$ on model interaction. Intuitively, the degree of model interaction should be related to the degree of non-IID of data. Therefore, in this section, we verify the relationship between $\tau$ and the degree of non-IID. We conduct experiments on the Dirichlet non-IID setting and select $\alpha \in \{0.1, 0.5, 1.0\}$ as the three non-IID levels on CIFAR-10 for the experiments. 

As shown in Table \ref{tau and alpha on cifar10}, when the degree of non-IID is high (i.e., $\alpha=0.1$), the optimal $\tau$ is close to 1.0. As the degree of non-IID weakens, the optimal value of $\tau$ also decreases. When $\alpha=0.5$, the optimal $\tau$ decreases to about 0.7, and when $\alpha=1$, the optimal $\tau$ is less or equal to 0.5. There is an obvious negative correlation between $\alpha$ and $\tau$. To verify the rule's universality, we also conduct experiments on the FMNIST and CIFAR-100 datasets. Similar trends to CIFAR-10 can be seen in Table \ref{tau and alpha on fmnist} and Table \ref{tau and alpha on cifar100}. The experiments on three datasets show that $\tau$ can make DiversiFed find the optimal degree of collaboration between personalized models in different scenarios to obtain the optimal performance.

Combined with the conclusion in section \ref{effect of tau on distance}, a higher $\tau$ means that the ``pushing apart'' effect of dissimilar models and the ``pulling together'' effect of similar models are stronger. This indicates that when the degree of non-IID is high, personalized models of clients tend to approach models of clients with similar distributions while departing from those with large data distribution differences. This further validates our idea. When clients have very different data distributions, their optimal models should be very different in the parameter space, and it is more helpful to keep their models away from each other in the parameter space.

In general, $\tau$ is a hyperparameter strongly correlated with the degree of non-IID. When the degree of non-IID is high, we tend to choose a large $\tau$, and vice versa.

\subsection{Robustness to Partial Client Participant Problem}
\begin{table*}[tb]
% \setlength{\abovecaptionskip}{0.cm}
% \setlength{\belowcaptionskip}{-0.cm}
	%	\renewcommand\arraystretch{1.25}
	% \caption{Comparison with state-of-the-art methods under Dirichlet non-IID On CIFAR-10.}
	% \label{dirichlet noniid cifar10}
 \caption{Influence of partial client participation on different methods.}
	\vskip 0in
	\begin{center}
		\begin{normalsize}
			% \begin{sc}
				\begin{tabular}{lccccr}
					\toprule
					Methods & 100\% & 90\% & 70\% & 50\% \\
					\midrule
					FedAvg & 51.19 $\pm$ 1.05 & 51.48 $\pm$ 0.63 {\small (+0.29)} & 50.87 $\pm$ 0.85 {\small (-0.32)} & 51.34 $\pm$ 0.63 {\small (+0.15)} \\
					Per-FedAvg & 83.17 $\pm$ 0.69 & 83.25 $\pm$ 0.47 {\small (+0.08)} & 83.23 $\pm$ 0.65 {\small (+0.06)} & 83.03 $\pm$ 0.55 {\small (-0.14)}  \\
					pFedMe & 85.84 $\pm$ 0.78 & 85.88 $\pm$ 0.71 {\small (+0.04)} & 86.06 $\pm$ 0.76 {\small (+0.22)} & 85.90 $\pm$ 0.76 {\small (+0.06)} \\
					FedAMP & 85.17 $\pm$ 0.83 & 85.04 $\pm$ 0.59 {\small (-0.13)} & 84.95 $\pm$ 0.99 {\small (-0.22)} & 84.90 $\pm$ 0.64 {\small (-0.27)} \\
					FedFomo & 84.88 $\pm$ 0.35 & 84.48 $\pm$ 0.95 {\small (-0.40)} & 84.75 $\pm$ 0.66 {\small (-0.13)} & 84.68 $\pm$ 0.29 {\small (-0.20)} \\
                    APPLE & 85.45 $\pm$ 0.92 & 85.41 $\pm$ 0.95 {\small (-0.04)} & 85.47 $\pm$ 0.72 {\small (+0.02)} & 84.76 $\pm$ 0.40 {\small (-0.69)} \\
					\midrule
					DiversiFed & \textbf{88.04 $\pm$ 0.72} & \textbf{88.01 $\pm$ 0.78} {\small (-0.03)} & \textbf{87.83 $\pm$ 0.70} {\small (-0.21)} & \textbf{87.45 $\pm$ 0.70} {\small (-0.59)} \\
					\bottomrule
				\end{tabular}
			% \end{sc}
		\end{normalsize}
	\end{center}
 \label{partial client participant}
	\vskip -0.0in
\end{table*}
In some FL scenarios, especially in edge computing scenarios, the participation of clients in each communication round can be affected by various factors, such as unstable communication links or client offline. This leads to the partial client participation problem, where only a portion of clients participates in the FL training in each round. The robustness of FL algorithms to this problem is crucial for real-world deployments. In this section, we evaluate the performance of DiversiFed and other SOTA methods under different degrees of partial client participation. We consider scenarios where 50\%, 70\%, and 90\% of the clients participate in each communication round. The experiments are conducted on the CIFAR-10 dataset under the pathological non-IID setting, and the results are shown in Table \ref{partial client participant}.

As the number of clients involved in training decreases in a round, the accuracy of DiversiFed is slightly affected. With 50\% client participation, DiversiFed's accuracy is reduced by 0.59\% compared to the case of 100\% client participation. This reduction is expected as DiversiFed cannot fully leverage the information from all clients in the model distance loss calculation. Each client loses some of the assistance provided by other clients. However, DiversiFed can still consistently outperforms other SOTA methods in terms of accuracy. This demonstrates the superior performance of DiversiFed. At the same time, even if only half of the clients participate in the training in each round, the performance of DiversiFed does not decrease significantly, which indicates the robustness of DiversiFed to the partial client participation problem.

Furthermore, we observe that FedAMP, FedFomo, APPLE, and DiversiFed are more susceptible to the partial client participation problem compared to Per-FedAvg and pFedMe. This is because FedAMP, FedFomo, APPLE, and DiversiFed employ weighted aggregation methods, where different aggregate weights are assigned to clients based on the difference in their data distributions during model aggregation. When a client with a higher weight does not participate in the training round, clients relying on this client's information experience a greater loss.

\subsection{Effects of FL Hyperparameters}
The number of distributed client $N$ and the client local update epoch $E$ are two important hyperparameters in FL. In the previous experiment, we simply set $N=40$ and $E=10$. In this section, we investigate the effect of these two hyperparameters on the performance of DiversiFed. We carry out experiments on three datasets under Dirichlet non-IID setting with $\alpha=0.1$.

\textbf{The influence of client number $N$.} The number of clients impacts the FL algorithm in two main ways. Increasing $N$ leads to a larger amount of data available for training in the FL group, potentially providing more information and improving performance. However, a larger $N$ also means more diverse data distributions, which can amplify the impact of non-IID data. In this experiment, we sample $N \in \{20, 40, 60, 80, 100 \}$. The results are shown in Fig.~\ref{fig:effect of client number}.

On the FMNIST dataset, the accuracy first increases and then decreases as $N$ increases. Initially, increasing $N$ from 20 to 40 improves the accuracy as more data becomes available, allowing each client to benefit from the knowledge of other clients. However, beyond $N=40$, the influence of non-IID becomes more significant, leading to a decrease in accuracy. Since FMNIST is relatively simple, a good performance can be achieved with a small amount of data, and adding more clients does not necessarily improve the performance. On the CIFAR-10 and CIFAR-100 datasets, which are more challenging, the benefits of having more data outweigh the impact of non-IID, resulting in increased accuracy with larger $N$.

\begin{figure}[tb]
	\centering
	\subfloat[FMNIST]{
		\includegraphics[width=0.31\linewidth]{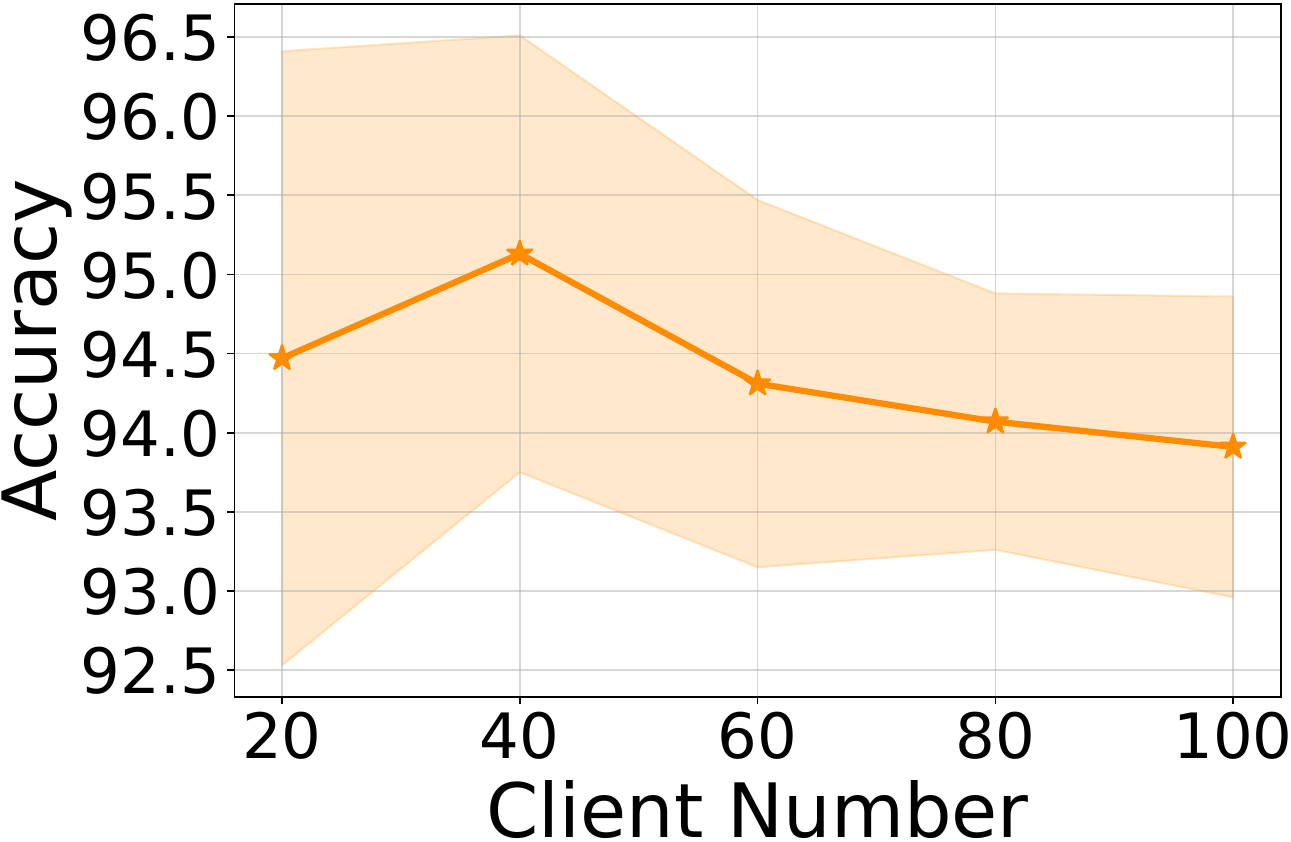}}
% 		\hspace{-0.2in} 
	\subfloat[CIFAR-10]{
		\includegraphics[width=0.31\linewidth]{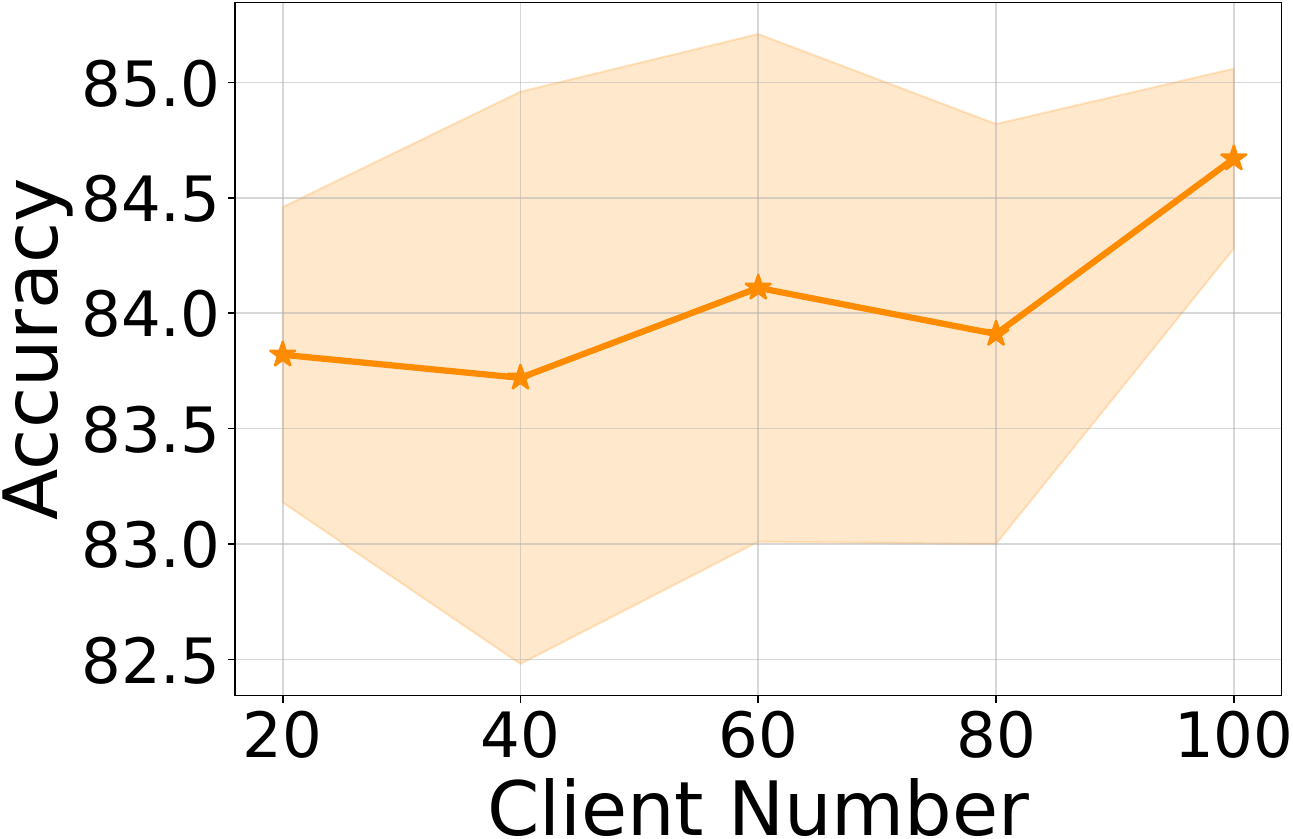}}
	   % \vspace{-0.3in}
	\subfloat[CIFAR-100]{
		\includegraphics[width=0.30\linewidth]{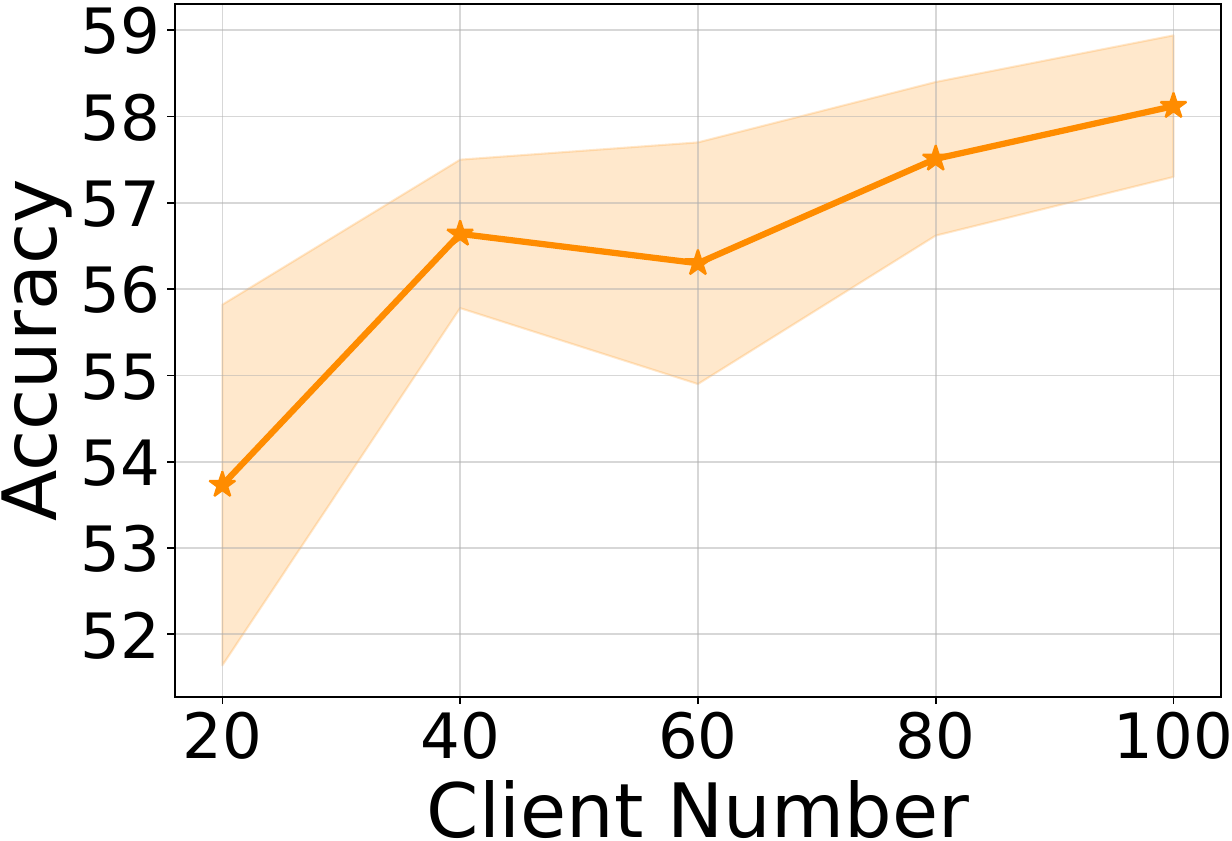}}
	   % \hspace{-0.2in} 
	
	\caption{The influence of client number.}
	\label{fig:effect of client number}
\end{figure}

\textbf{The influence of local epoch $E$.} The value of $E$ affects the frequency of model distance loss calculations. A larger $E$ allows clients to learn more from the global model $z_i$, while a smaller $E$ leads to more frequent communication among clients, reducing the impact of client drift due to prolonged local updates. In this experiment, we sample $E \in \{5, 10, 15, 20, 25 \}$. The results are shown in Fig.~\ref{fig:effect of local epoch}.

On the FMNIST and CIFAR-10 datasets, the accuracy of DiversiFed remains relatively stable with varying values of $E$, indicating good robustness of DiversiFed to this hyperparameter. On the CIFAR-100 dataset, there is a significant decrease in accuracy as $E$ becomes larger. This is because the CIFAR-100 dataset has a more diverse data distribution compared to the CIFAR-10 dataset. Consequently, the impact of non-IID becomes more significant, and more frequent communication among clients is needed to mitigate the effects of client drift.

\begin{figure}[tb]
	\centering
	\subfloat[FMNIST]{
		\includegraphics[width=0.31\linewidth]{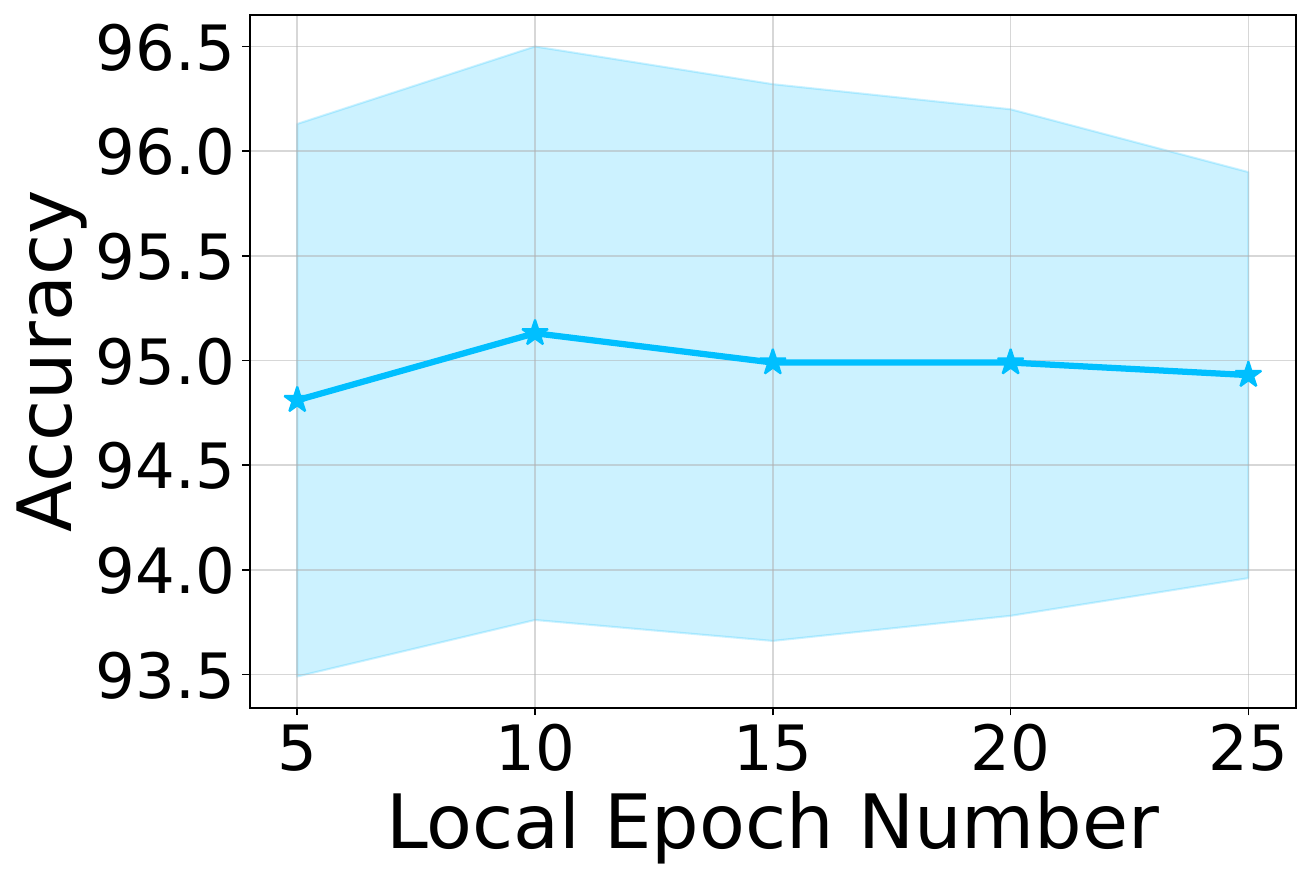}}
% 		\hspace{-0.2in} 
	\subfloat[CIFAR-10]{
		\includegraphics[width=0.30\linewidth]{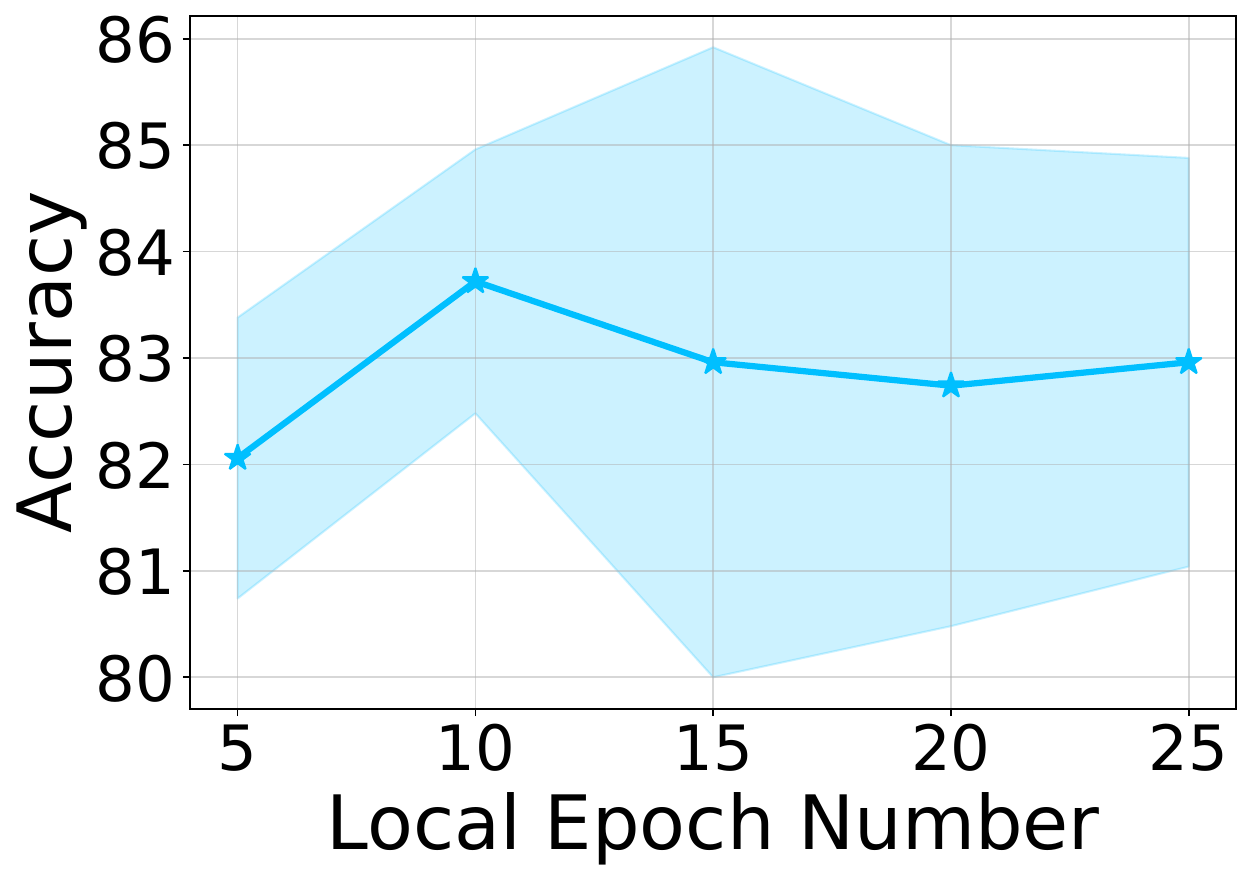}}
	   % \vspace{-0.3in}
	\subfloat[CIFAR-100]{
		\includegraphics[width=0.30\linewidth]{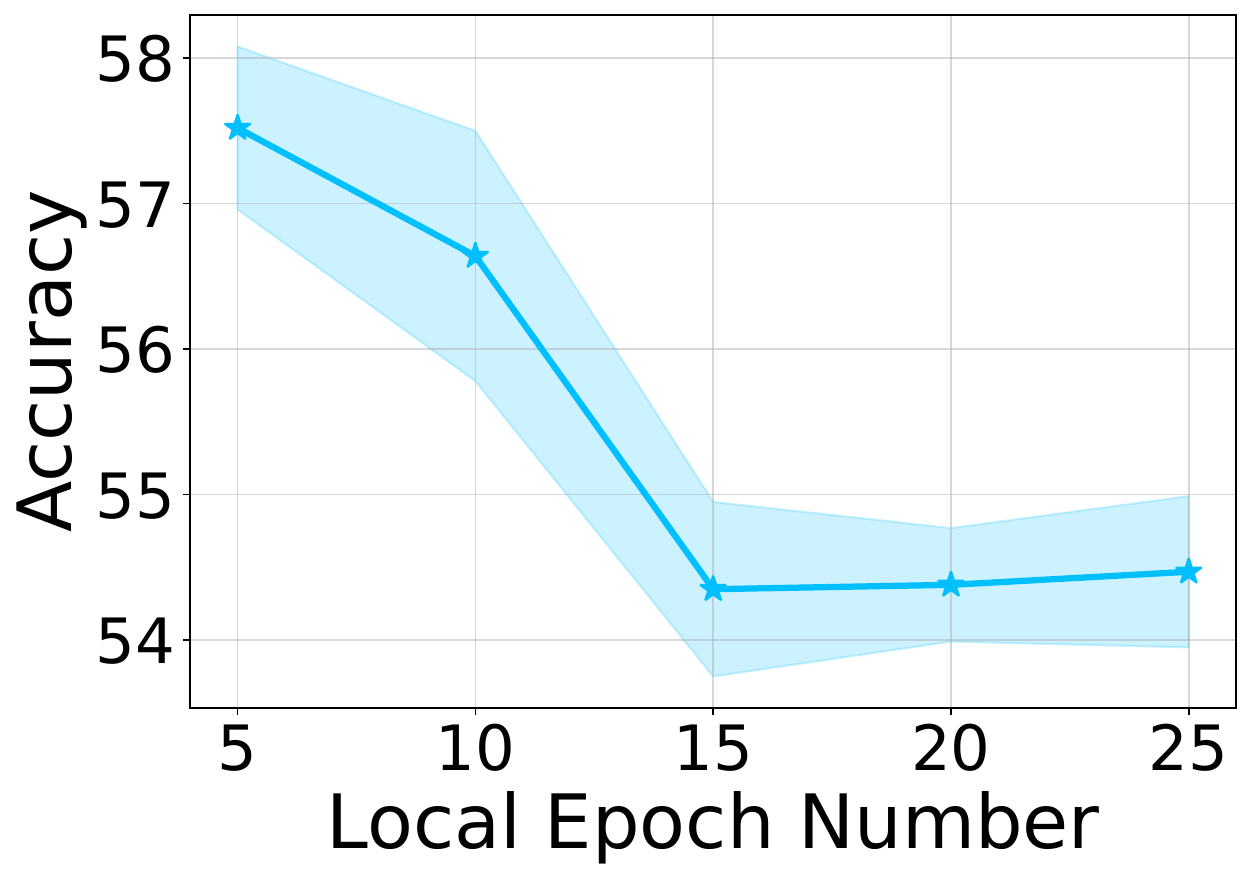}}
	   % \hspace{-0.2in} 
	
	\caption{The influence of local epoch number.}
	\label{fig:effect of local epoch}
\end{figure}

\subsection{Training Efficiency Analysis}

\textcolor{black}{In Section~\ref{Incremental Optimization Method}, we propose to utilize an incremental proximal method to minimize the loss function instead of directly optimizing Eq.~\eqref{contrastive loss} on the client side. The benefit of doing so is that it can reduce communication costs while avoiding privacy breaches. However, using this method requires the server to calculate the pair-wise Euclidean distance among clients, which requires quadratic complexity. In this section, we analyze whether this may greatly impede the efficiency of the system.}

We first calculate the computational overhead brought by computing pair-wise model distances on the server side, using flops (floating-point operations) as the metric. Given that PFL typically occurs in cross-silo scenarios, where the number of clients is usually less than 100 \cite{kairouz2021advances}, we estimate the server flops when the number of clients is 40 and 100.

Additionally, to illustrate the impact of the server on system efficiency, we also need to calculate the computing overhead on the client side. Considering that the clients are trained in parallel, we measure the computational overhead of one client within the system. Specifically, for each client, we measure its flops for both forward and backward propagation. For the forward propagation, We adopt the \textit{DeepSpeed Flops Profiler} \cite{deepspeed} to assess the flops. For backward propagation, although there is currently no precise method to calculate, it is widely accepted that backpropagation has twice as many flops as forward propagation \cite{epoch2021backwardforwardFLOPratio, zhou2021efficient}.

\begin{table*}[tb]
\caption{The number of floating point operations (flops) in the primary computational workload of the server and client in one round.}
	\vskip 0in
 \footnotesize
	\begin{center}
		% \begin{normalsize}
  % \begin{small}
			% \begin{sc}
				\begin{tabular}{ccccccc}
					\toprule
					Dataset & Model & Params & Client flops & Clients & Server flops & Server / Client ratio (\%) \\
                    \bottomrule
                    \multirow{2}{*}{FMNIST} & \multirow{2}{*}{MLP} & \multirow{2}{*}{50.89\textit{K}} & \multirow{2}{*}{1.53\textit{G}} & 40 & 122.14\textit{M} & 7.98\% \\
                    & & & & 100 & 763.35\textit{M} & 49.89\%  \\
					\bottomrule
                    \multirow{2}{*}{CIFAR-10} & \multirow{2}{*}{CNN} & \multirow{2}{*}{62.01\textit{K}} & \multirow{2}{*}{13.42\textit{G}} & 40 & 148.82\textit{M} & 1.11\% \\
                    & & & & 100 & 930.15\textit{M} & 6.93\%  \\
                    \bottomrule
                    \multirow{2}{*}{CIFAR-100} & \multirow{2}{*}{ResNet-8} & \multirow{2}{*}{1.26\textit{M}} & \multirow{2}{*}{438.24\textit{G}} & 40 & 3.02\textit{G} & 0.69\% \\
                    & & & & 100 & 18.90\textit{G} & 4.31\%  \\
                    \bottomrule
				\end{tabular}
			% \end{sc}
   % \end{small}
		% \end{normalsize}
	\end{center}
 \label{flops}
	\vskip -0.0in
\end{table*}

\textcolor{black}{The statistical results of one client and server flops are presented in Table~\ref{flops}. It is evident that during one training round, the computing overhead of the system predominantly resides on the client side. The proportion of computational cost attributed to calculating the Euclidean distance is relatively small. Furthermore, considering the server's substantial computing power (e.g., 1248 TFLOPS (tera floating-point operations per second) with NVIDIA 8x A100 GPUs), which significantly surpasses that of clients (e.g., 36 TFLOPS with an NVIDIA RTX 3090), we conclude that computing pair-wise distances among all clients would not significantly impede the efficiency of the system.}

\section{Conclusion}
In this paper, we propose DiversiFed, a novel approach that addresses the non-IID problem in federated learning by introducing model distance loss at the parameter level. In contrast to previous methods that focus solely on clients with similar data distributions, DiversiFed enables clients to benefit from those with dissimilar data distributions by pushing their models apart in the parameter space. Extensive experimental results on both natural images and medical images demonstrate its superior performance compared to existing methods. DiversiFed has the potential to significantly enhance the effectiveness of federated learning in real-world scenarios with non-IID data.

% \begin{thebibliography}{1}
\bibliographystyle{IEEEtran}
\bibliography{bare_jrnl_new_sample4}

\vfill

\end{document}